\newtheorem{lemma}{Lemma}
\newcommand*{\textcompress}[1]{\textls[-10]{#1}}
\definecolor{emerald}{rgb}{0.31, 0.78, 0.47}
\definecolor{lightcoral}{rgb}{0.94, 0.5, 0.5}
\definecolor{gold(web)(golden)}{rgb}{1.0, 0.84, 0.0}
\definecolor{lightcornflowerblue}{rgb}{0.6, 0.81, 0.93}
\newcommand*{\mi}{\ensuremath{I}}
\newcommand{\eqdef}{\triangleq}
\newcommand{\p}{x}
\newcommand{\f}{y}
\newcommand*{\uprime}{^{\prime}\mkern-1.2mu}
\newcommand{\expect}{\mathbb{E}}
\newcommand{\cri}{\psi}
\newcommand{\criopt}{\psi^*}
\newcommand{\crip}{\phi}
\newcommand\INCE{\ensuremath{\textrm{InfoNCE}}}
\newcommand{\x}{\ensuremath{x}}
\newcommand{\xt}{y}
\newcommand{\xp}{x}
\newcommand{\ps}{x}
\newcommand{\xpp}{x\uprime}
\newcommand{\xup}{x\uprime}
\icmltitlerunning{}
\begin{document}

\twocolumn[
\icmltitle{Decomposed Mutual Information Estimation \\ for Contrastive Representation Learning}

\begin{icmlauthorlist}
\icmlauthor{Alessandro Sordoni*}{m}
\icmlauthor{Nouha Dziri*}{u}
\icmlauthor{Hannes Schulz*}{m}
\icmlauthor{Geoff Gordon}{m}
\icmlauthor{Phil Bachman}{m}
\icmlauthor{Remi Tachet}{m}
\end{icmlauthorlist}

\icmlaffiliation{m}{Microsoft Research}
\icmlaffiliation{u}{University of Alberta}

\icmlcorrespondingauthor{Alessandro Sordoni}{alsordon@microsoft.com}
\icmlcorrespondingauthor{Nouha Dziri}{dziri@cs.ualberta.ca}

% You may provide any keywords that you
% find helpful for describing your paper; these are used to populate
% the "keywords" metadata in the PDF but will not be shown in the document
\icmlkeywords{}

\vskip 0.3in
]

%\printAffiliationsAndNotice{}  % leave blank if no need to mention equal contribution
\printAffiliationsAndNotice{\icmlEqualContribution} % otherwise use the standard text.

\begin{abstract}
Recent contrastive representation learning methods rely on estimating mutual information (MI) between multiple views of an underlying context. E.g., we can derive multiple views of a given image by applying data augmentation, or we can split a sequence into views comprising the past and future of some step in the sequence. Contrastive lower bounds on MI are easy to optimize, but have a strong underestimation bias when estimating large amounts of MI. We propose decomposing the full MI estimation problem into a sum of smaller estimation problems by splitting one of the views into progressively more informed subviews and by applying the chain rule on MI between the decomposed views. This expression contains a sum of unconditional and conditional MI terms, each measuring modest chunks of the total MI, which facilitates approximation via contrastive bounds. To maximize the sum, we formulate a contrastive lower bound on the conditional MI which can be approximated efficiently. We refer to our general approach as Decomposed Estimation of Mutual Information (DEMI). We show that DEMI can capture a larger amount of MI than standard non-decomposed contrastive bounds in a synthetic setting, and learns better representations in a vision domain and for dialogue generation.

\end{abstract}

\section{Introduction}

The ability to extract actionable information from data in the absence of explicit supervision seems to be a core prerequisite for building systems that can, for instance, learn from few data points or quickly make analogies and transfer to other tasks. Approaches to this problem include generative models~\citep{hinton2012practical,vae} and self-supervised representation learning approaches, in which the objective is not to maximize likelihood, but to formulate a series of (label-agnostic) tasks that the model needs to solve through its representations~\citep{noroozi2016unsupervised,devlin2018bert,gidaris2018unsupervised,hjelm2018learning}. Self-supervised learning includes successful models leveraging contrastive learning, which have recently attained comparable performance to their fully-supervised counterparts~\citep{amdim,chen2020simple}.

Recent self-supervised learning methods can be seen as training an encoder $f$ such that it maximizes the mutual information (MI) between representations $f(\cdot)$ of a pair of views $x$ and $y$ of the same input datum, $\mi(f(\x); f(\xt)) \le \mi(\x; \xt)$\footnote{In what follows, we will slightly abuse language and use the expression ``maximizing $\mi(\x, \xt)$'' as a shortcut for ``maximizing a lower bound on $\mi(\x, \xt)$ with respect to $f$''.}. For images, different views can be built using random flipping or color jittering~\citep{amdim,chen2020simple}. For sequential data such as conversational text, the views can be past and future utterances in a given dialogue, or a particular word and its surrounding context~\citep{stratos2018mutual}.
Contrastive approaches train representations of pairs of views to be more similar to each other than to representations sampled from a negative sample distribution. The \INCE{} bound on $\mi(\x; \xt)$ ~\citep{oord2018representation} has been successful insofar as it enjoys much lower variance than competing approaches~\citep{song2019understanding}. However, the capacity of the bound is limited by the number of contrastive samples used~\citep{mcallester2018formal,poole2019variational} and is therefore likely biased when a large amount of MI needs to be estimated,~e.g. between high dimensional objects such as natural images.

\begin{figure*}[th]
    \centering
    \includegraphics[scale=0.40]{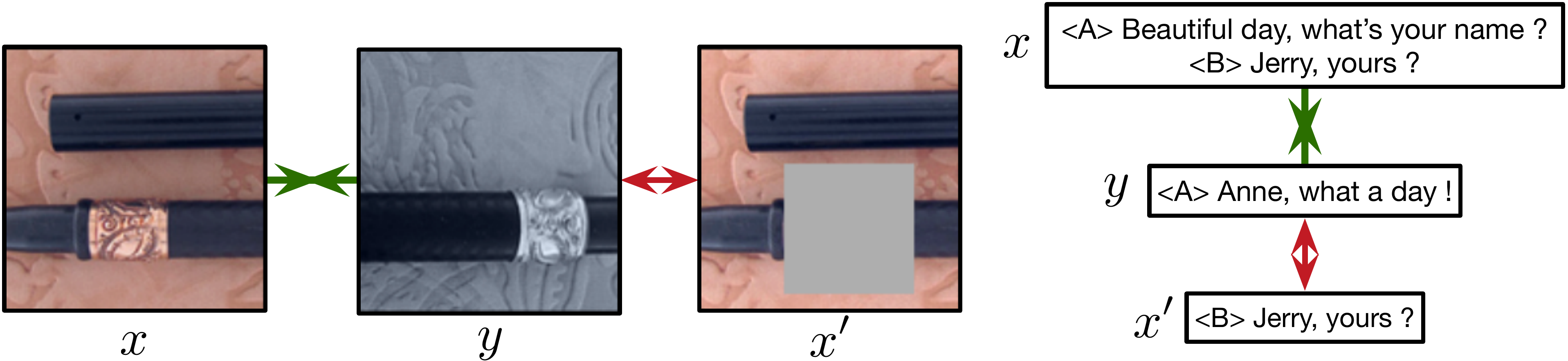}
    \caption{(\textbf{left}) Given two augmentations $\x$ and $\xt$, we create a subview $\xup$, which is obtained by occluding some of the pixels in $\xp$. We can maximize $\mi(\x; \xt) \ge \mi(\xup; \xt) + \mi(\xp; \xt | \xup)$ using a contrastive bound by training $\xup$ to be closer to $\xt$ than to other images from the corpus. Additionally, we train $\xp$ to be closer to $\xt$ than to samples from $p(\xt | \xup)$, i.e.~we can use $\xup$ to generate hard negatives $\xt$, which corresponds to maximizing conditional MI, and leads the encoder to capture features not explained by $\xup$. (\textbf{right}) A fictional dialogue in which $\xp$ and $\xt$ represent past and future of the conversation respectively and $\xup$ is the ``recent past''. In this context, the conditional MI term encourages the encoder to capture long-term dependencies that cannot be explained by the most recent utterances.}
    \label{fig:figure_intro}
\end{figure*}

The starting point of this paper is to decompose $\mi(\x, \xt)$ by applying the chain rule on MI to obtain a sum of terms, each containing smaller chunks of the total MI that can be approximated with less bias by contrastive approaches. For example, consider creating a subview $\xup$ by removing information from $\x$, e.g.~by masking some pixels as depicted in Fig.~\ref{fig:figure_intro} (\textit{left}). By construction, $\mi(\xup, \xp; \xt) = \mi(\xup; \xt) + \mi(\xp; \xt | \xup) = \mi(\x; \xt)$. Decomposed Estimation of Mutual Information (DEMI) prescribes learning representations that maximize each term in the sum, by contrastive learning. The conditional MI term measures the information about $\xt$ that the model has gained by looking at $\xp$ given the information already contained in $\xup$. An intuitive explanation of why this term may lead to capturing more of the total MI between views can be found in Fig.\,\ref{fig:figure_intro}. For images (\textit{left}), only maximizing $\mi(\xp; \xt)$ could imbue the representations with the overall ``shape'' of the stick and representations would likely need many negative samples to capture other discriminative features of the image. By maximizing conditional MI, we hope to more directly encourage the model to capture these additional features, e.g.~the embossed detailing. In the context of predictive coding on sequential data such as dialogue, by setting $\xup$ to be the most recent utterance (Fig.\,\ref{fig:figure_intro}, \textit{right}), the encoder is directly encouraged to capture long-term dependencies that cannot be explained by the most recent utterance.

\textls[-5]{One may wonder how DEMI is related to recent approaches maximizing MI between more than two views, amongst them AMDIM~\citep{amdim}, CMC~\citep{tian2019contrastive} and SwAV~\citep{caron2020unsupervised}.} Interestingly, these models can be seen as maximizing the sum of MIs between views $\mi(\x, \xup; \xt) = \mi(\xup; \xt) + \mi(\xp; \xt)$. E.g.,~in~\citet{amdim}, $\xp$ and $\xup$ could be global and local representations of an image, and in~\citet{caron2020unsupervised}, $\xp$ and $\xup$ could be the views resulting from standard cropping and the aggressive multi-crop strategy. This equality is only valid when the views $\xp$ and $\xup$ are statistically independent, which usually does not hold. Instead, DEMI maximizes $\mi(\xp, \xup; \xt) = \mi(\xup; \xt) + \mi(\xp; \xt | \xup)$, which always holds. Most importantly, the conditional MI term encourages the encoder to capture more non-redundant information across views.

Our contributions are the following. We show that DEMI can potentially capture more of the total information shared between the original views $\x$ and $\xt$. We extend existing contrastive MI bounds to conditional MI estimation and present novel computationally tractable approximations. Supplementally, our results offer another perspective on \textit{hard} contrastive examples,~i.e.,~\citet{faghri2017vse++}, given that conditional MI maximization can be achieved by sampling contrastive examples from a partially informed conditional distribution instead of the marginal distribution. We first show in a synthetic setting that DEMI leads to capturing more of the ground-truth MI thus alleviating bias existing in InfoNCE. Finally, we present evidence of the effectiveness of the proposed method in vision and in dialogue generation.

%!TEX root = main.tex
\section{Problem Setting}

The maximum MI predictive coding framework~\citep{mcallester2018information,oord2018representation,hjelm2018learning} prescribes learning representations of input data such that they maximize MI between inputs and representations. Recent interpretations of this principle create two independently-augmented copies $\x$ and $\xt$ of the same input by applying a set of stochastic transformations twice, and then learn representations of $\x$ and $\xt$ by maximizing the MI of the respective features produced by an encoder $f: \mathcal{X} \rightarrow \mathbb{R}^d$~\citep{amdim,chen2020simple}:
\begin{align}
\arg\max_{f}\; \mi(f(\x); f(\xt)) \le \mi(\x; \xt)
\label{eq:gen_mi}
\end{align}
where the upper bound is due to the data processing inequality. Our starting point to maximize Eq.\,\ref{eq:gen_mi} is the recently proposed \INCE{} lower bound on MI~\citep{oord2018representation} which trains $f(\x)$ to be closer to $f(\xt)$ than to the representations of other images drawn from the marginal distribution of the corpus. This can be viewed as a \textit{contrastive} estimation of the MI~\citep{oord2018representation} and has been shown to enjoy lower variance than competing approaches~\citep{song2019understanding}.

\subsection{InfoNCE Bound}

%$I_{VAR}  I_{\text{VAR}}  I_{\text{\textit{VAR}}$

\INCE{}~\citep{oord2018representation} is a lower-bound on $\mi(\x; \xt)$ obtained by comparing pairs sampled from the joint distribution $x, y_1 \sim p(\x, \xt)$ to pairs $x, y_i$ built using a set of negative examples, $y_{2:K} \sim p(y_{2:K}) = \prod_{k = 2}^K p(y_k)$, also called \textit{contrastive}, independently sampled from the marginal:
\begin{align}
I_{\textit{NCE}}(\x, \xt | \crip, K) = \mathbb{E} \left[\log \frac{e^{\cri(\x, \xt_1)}}{\frac{1}{K} \sum_{k=1}^K e^{\cri(\x, \xt_k)}}\right],
\label{eq:infonce}
\end{align}
where the expectation is with respect to $p(\x, \xt_1)p(\xt_{2:K})$ and $\cri$ is a critic assigning a real valued score to $x, y$ pairs. Usually, $\cri$ is the dot product of the representations after applying an additional transformation $g$, e.g.~an MLP, $\cri(x, y) \eqdef g(f(x))^T g(f(y))$~\citep{chen2020simple}. We provide an exact derivation of this bound in the Appendix\footnote{The derivation in~\citet{oord2018representation} presented an approximation and therefore was not properly a bound. An alternative, exact derivation of the bound can be found in~\cite{poole2019variational}.}.
The optimal value of $I_{\textit{NCE}}$ is reached for a critic proportional to the log-odds between the conditional distribution $p(\xt | \x)$ and the marginal distribution $p(\xt)$, i.e. the PMI between $\x$ and $\xt$, $\criopt(\x, \xt) = \log \frac{p(\xt | \x)}{p(\xt)} + c(\x)$~\citep{oord2018representation,ma2018noise,poole2019variational}.

\INCE{} has recently been extensively used in self-supervised representation learning given that it enjoys lower variance than some of its competitors such as MINE~\citep{pmlr-v80-belghazi18a,song2019understanding}.
However, the bound is loose if the true mutual information $\mi(\x; \xt)$ is larger than $\log K$, which is likely when dealing with high-dimensional inputs such as natural images. To overcome this difficulty, recent methods either train with large batch sizes~\citep{chen2020simple} or exploit an external memory of negative samples in order to reduce memory requirements~\citep{chen2020improved,tian2020makes}. These methods rely on uniform sampling from the training set in order to form the contrastive sets. Discussion of limits of variational bounds can be found in~\citet{mcallester2018formal}.
%!TEX root = main.tex

\section{Decomposing Mutual Information}
When $\mathcal{X}$ is high-dimensional, the amount of mutual information between $x$ and $y$ will potentially be larger than the amount of MI that $I_{\textit{NCE}}$ can measure given computational constraints associated with large $K$ and the poor log scaling properties of the bound. We argue that we can ease this estimation problem by creating subviews of $x$ and applying the chain rule on MI to decompose the total MI into a sum of potentially smaller MI terms.

By the data processing inequality, we have: $\mi(\x; \xt) \ge \mi(\{\x^1, \ldots, \x^N\}; \xt)$, where $\{\x^1, \ldots, \x^N\}$ are different subviews of $\x$ -- i.e., views derived from $\x$ without adding \textit{any} exogenous information. For example, $\{\x^1, \ldots, \x^N\}$ can represent single utterances in a dialog $\x$, sentences in a document $\x$, or different augmentations of the same image $\x$. Equality is obtained when the set of subviews retains all information about $\x$ or if $\x$ is in the set.

For ease of exposition and without loss of generality, we consider the case where we have two subviews, $x$ itself and $\xup$. Then, $\mi(\x; \xt) = \mi(\x, \xup; \xt)$ and we can write $\mi(\x, \xup; \xt)$ by applying the chain rule for MI:
\begin{equation}
\mi(\x, \xup; \xt) = \mi(\xup; \xt) + \mi(\x; \xt | \xup).
\label{eq:mi-chain-rule}
\end{equation}

The conditional MI term can be written as:
\begin{equation}
\mi(\xp; \xt | \xup) = \mathbb{E}_{p(\xp, \xup, \xt)} \log \frac{p(\xt | \x, \xup)}{p(\xt | \xup)}.
\label{eq:condmi}
\end{equation}
This conditional MI is different from the unconditional MI, $\mi(\xp; \xt)$, as it measures the amount of information shared between $\xp$ and $\xt$ that cannot be explained by $\xup$.

Lower bounding each term in Eq.~\ref{eq:mi-chain-rule} with a contrastive bound can potentially lead to a less biased estimator of the total MI. This motivates us to introduce DEMI, a sum of unconditional and conditional lower bounds:
\begin{equation}
I_{\textit{DEMI}} = I_{\textit{NCE}}(\xup; \xt) + I_{\textit{CNCE}}(\x; \xt | \xup) \leq \mi(\x; \xt),
\label{eq:sep_info_nce}
\end{equation}
where $I_{\textit{CNCE}}$ is a placeholder for a lower bound on the conditional MI and will be presented in the next section. Both conditional and unconditional bounds on the MI can capture at most $\log K$ nats of MI. Therefore, DEMI in Eq.~\ref{eq:sep_info_nce} potentially allows to capture up to $N \log K$ nats of MI in total, where $N$ is the number of subviews used to describe $\x$. This is strictly larger than $\log K$ in the standard $I_{\textit{NCE}}$.

\section{Contrastive Conditional MI Estimation}
One of the difficulties in computing DEMI is estimating the conditional MI. In this section, we provide bounds and approximations of this quantity. First, we show that we can readily extend InfoNCE:

\begin{restatable}[\textbf{Conditional InfoNCE}]{prop}{cnce} $I_{\textit{CNCE}}$ is a lower-bound on $\mi(\xp; \xt | \xpp)$ and verifies the properties below:
\begin{align}
I_{\textit{CNCE}}(\xp; \xt | \xpp, \crip, K) = \mathbb{E} \bigg[ \log \frac{e^{\crip(\xpp, \xp, \xt_1)}}{\frac{1}{K}\sum_{k=1}^K e^{\crip(\xpp, \xp, \xt_k)}}\bigg],
\label{eq:condmibound}
\end{align}
\vspace*{-9mm}
\begin{enumerate}
\item $I_{\textit{CNCE}} \le \mi(\xp; \xt | \xpp)$.
\item $\crip^* = \arg \sup_\crip I_{\textit{CNCE}} = \log \frac{p(\xt | \xpp, \xp)}{p(\xt | \xpp)} + c(\xp, \xpp)$.
\item $\lim_{K \rightarrow \infty} I_{\textit{CNCE}}(\xp; \xt | \xpp, \crip^*, K) = \mi(\xp; \xt | \xpp)$.

\end{enumerate}
\end{restatable}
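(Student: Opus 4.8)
The plan is to reduce all three claims to the already-established facts about the unconditional $I_{\textit{NCE}}$ bound (its validity as a lower bound on $\mi(\x;\xt)$, its optimal critic $\criopt=\log\frac{p(\xt\mid\x)}{p(\xt)}+c(\x)$, and its consistency $I_{\textit{NCE}}(\criopt,K)\to\mi(\x;\xt)$ as $K\to\infty$) by treating the conditioning variable $\xpp$ as a parameter held fixed. Concretely, I would first regroup the expectation defining $I_{\textit{CNCE}}$ in Eq.~\ref{eq:condmibound} as an outer expectation over $\xpp\sim p(\xpp)$ of an inner expectation over $\xp,\xt_1\sim p(\xp,\xt\mid\xpp)$ and $\xt_{2:K}\sim\prod_{k\ge2}p(\xt_k\mid\xpp)$. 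For each fixed value $v$ of $\xpp$, the inner expression is \emph{exactly} the unconditional $I_{\textit{NCE}}$ bound computed with respect to the conditional laws $p(\cdot\mid\xpp{=}v)$, $p(\cdot,\cdot\mid\xpp{=}v)$ and the partially-applied critic $(\xp,\xt)\mapsto\crip(v,\xp,\xt)$. This gives the identity $I_{\textit{CNCE}}(\xp;\xt\mid\xpp,\crip,K)=\mathbb{E}_{v\sim p(\xpp)}\big[\,I_{\textit{NCE}}^{\,v}\big(\crip(v,\cdot,\cdot),K\big)\big]$, where $I_{\textit{NCE}}^{\,v}$ is the InfoNCE functional taken under $p(\cdot\mid\xpp{=}v)$, and this identity drives everything else.

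Given the identity, the first claim follows by applying the unconditional bound pointwise: for $p(\xpp)$-almost every $v$ one has $I_{\textit{NCE}}^{\,v}\big(\crip(v,\cdot,\cdot),K\big)\le\mi(\xp;\xt\mid\xpp{=}v)$, and integrating over $v$ gives $I_{\textit{CNCE}}\le\mathbb{E}_v[\mi(\xp;\xt\mid\xpp{=}v)]=\mi(\xp;\xt\mid\xpp)$ by Eq.~\ref{eq:condmi}. For the second claim, for each fixed $v$ the inner $I_{\textit{NCE}}^{\,v}$ is maximized over its critic at $\log\frac{p(\xt\mid\xpp{=}v,\xp)}{p(\xt\mid\xpp{=}v)}+c_v(\xp)$; since a critic $\crip$ of $(\xpp,\xp,\xt)$ is free to depend arbitrarily on its first argument, the inner maximizations are decoupled across $v$, so the supremum of the integral equals the integral of the pointwise suprema and is attained at $\crip^*(\xpp,\xp,\xt)=\log\frac{p(\xt\mid\xpp,\xp)}{p(\xt\mid\xpp)}+c(\xp,\xpp)$, the additive $c(\xp,\xpp)$ recording the non-uniqueness. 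For the third claim, I would substitute $\crip^*$ with $c\equiv0$, note that $e^{\crip^*(\xpp,\xp,\xt_k)}=p(\xt_k\mid\xpp,\xp)/p(\xt_k\mid\xpp)$ has conditional mean $1$ over the negatives $\xt_k\sim p(\cdot\mid\xpp)$ ($k\ge2$), invoke the strong law of large numbers to get $\frac1K\sum_{k=2}^Ke^{\crip^*(\xpp,\xp,\xt_k)}\to1$ and $\frac1Ke^{\crip^*(\xpp,\xp,\xt_1)}\to0$ almost surely, and conclude that the integrand tends to $\log\frac{p(\xt_1\mid\xpp,\xp)}{p(\xt_1\mid\xpp)}$, whose expectation equals $\mi(\xp;\xt\mid\xpp)$ by Eq.~\ref{eq:condmi}.

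The step I expect to be the main obstacle is interchanging $\lim_{K\to\infty}$ with the outer expectation in the third claim (and, more minorly, making the pointwise-in-$\xpp$ optimization of the second claim rigorous, though this is painless since $\crip^*$ is written down explicitly and is measurable). To justify the interchange I would use a monotone/dominated convergence argument: the inner sequence $K\mapsto I_{\textit{NCE}}^{\,v}(\crip^*,K)$ is non-negative, non-decreasing in $K$, and bounded above by $\mi(\xp;\xt\mid\xpp{=}v)$, which is $p(\xpp)$-integrable whenever $\mi(\xp;\xt\mid\xpp)<\infty$; hence the limit passes through the expectation. Everything else is routine bookkeeping once the reduction identity is in place, since the unconditional $I_{\textit{NCE}}$ facts it relies on are already available.
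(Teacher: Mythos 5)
Your proposal is correct and takes essentially the same approach as the paper: the central device in both is regrouping the expectation as an outer average over $\xpp$ and recognizing the inner expression as an unconditional InfoNCE functional under the conditional laws $p(\cdot\mid\xpp)$, which the paper invokes explicitly for claim 2 and which you apply uniformly to all three claims. The only differences are cosmetic: for claim 1 the paper re-runs the Barber--Agakov$+$Jensen derivation with proposal $p(\xt\mid\xpp)$ rather than citing the unconditional bound pointwise, and for claim 3 you add a dominated-convergence justification for the limit/expectation interchange that the paper passes over silently (your monotonicity-in-$K$ assertion is not actually needed there, since boundedness by the integrable $\mi(\xp;\xt\mid\xpp{=}v)$ together with the a.s.\ lower bound from the $\log K$ normalization already gives a dominating function).
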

\vspace*{-1mm}

The expectation is taken with respect to $p(\ps, \xpp, \xt_1)p(\xt_{2:K} | \xpp)$ and the expression is upper bounded by $\log K$. The proof can be found in Sec.~\ref{app:proof_condmi} and follows closely the derivation of the \INCE{} bound by applying a result from~\cite{barber2003algorithm}. A related derivation of this bound was also presented in~\citet{pmlr-v108-foster20a} for optimal experiment design.

Eq.~\ref{eq:condmibound} shows that a lower bound on the conditional MI can be obtained by sampling contrastive sets from the proposal distribution $p(\xt | \xpp)$ (instead of from the marginal $p(\xt)$ as in Eq.~\ref{eq:infonce}). Indeed, since we want to estimate the MI conditioned on $\xpp$, we should allow our contrastive distribution to condition on $\xpp$. Note that $\crip$ is now a function of three variables. One of the biggest hurdles in computing Eq.~\ref{eq:condmibound} is the access to many samples from $p(\xt | \xpp)$, which is unknown and usually challenging to obtain. In order to overcome this, we propose various solutions next.

\subsection{Variational Approximation}

It is possible to obtain a bound on the conditional MI by approximating the unknown conditional distribution $p(\xt | \xpp)$ with a variational distribution $q_{\xi}(\xt | \xpp)$, leading to the following proposition:

\begin{restatable}[\textbf{Variational $I_{\textit{CNCE}}$}]{prop}{ivar} For any variational approximation $q_\xi(\xt | \xpp)$ in lieu of $p(\xt | \xpp)$,
with $p(\cdot | \xpp) \ll q_\xi(\cdot | \xpp)$ for any $\xpp$, we have:
\begin{align}\label{eq:variational-approx}
&I_{\textit{VAR}}(\xp, \xt | \xpp, \phi, \xi, K) = \\ &\; \mathbb{E} \bigg[\log \frac{e^{\crip(\xpp, \xp, \xt_1)}}{\frac{1}{K}\sum_{k=1}^K e^{\crip(\xpp, \xp, \xt_k)}} \bigg] \notag - \mathbb{E} \bigg[ KL \left( p(y|\xpp) \,\|\, q_\xi\right)\bigg],
\end{align}
\begin{enumerate}
\item $I_{\textit{VAR}} \le \mi(\xp; \xt | \xpp)$.
\item If $q_\xi(y | \xpp) = p(y | \xpp)$, $I_{\textit{VAR}} = I_{\textit{CNCE}}$.
\item $\lim_{K \rightarrow \infty} \sup_{\crip} I_{\textit{VAR}}(\xp; \xt | \xpp, \crip, \xi, K) = \mi(\xp; \xt | \xpp)$.
\end{enumerate}
\label{prop:ivar}
\end{restatable}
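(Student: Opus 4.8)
The plan is to write $I_{\textit{VAR}}$ as (its first, InfoNCE-type term) $-\,\mathbb{E}_{p(\xpp)}[KL(p(\xt\mid\xpp)\,\|\,q_\xi(\xt\mid\xpp))]$ and to recognize that the first term, whose contrastive samples $\xt_{2:K}$ are now drawn from $q_\xi(\cdot\mid\xpp)$ rather than from $p(\cdot\mid\xpp)$, is a contrastive lower bound not on $\mi(\xp;\xt\mid\xpp)$ but on the ``$q_\xi$-referenced'' quantity $\mathrm{M}_{q_\xi}\eqdef\mathbb{E}_{p(\xp,\xpp,\xt)}\big[\log\{p(\xt\mid\xp,\xpp)/q_\xi(\xt\mid\xpp)\}\big]$. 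The bridge between the two is the exact identity, obtained from Eq.~\ref{eq:condmi} by multiplying and dividing the integrand by $q_\xi(\xt\mid\xpp)$ and using that the resulting correction factor is $\xp$-free:
\[
\mi(\xp;\xt\mid\xpp)=\mathrm{M}_{q_\xi}-\mathbb{E}_{p(\xpp)}\big[KL(p(\xt\mid\xpp)\,\|\,q_\xi(\xt\mid\xpp))\big],
\]
the KL term being finite by the hypothesis $p(\cdot\mid\xpp)\ll q_\xi(\cdot\mid\xpp)$.

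The core step is to show that for every critic $\crip$ the first term of $I_{\textit{VAR}}$ is at most $\mathrm{M}_{q_\xi}$. I would prove this exactly as the \INCE{}/$I_{\textit{CNCE}}$ bounds are proved, but conditionally on $\xpp$ and with $q_\xi(\cdot\mid\xpp)$ playing the role of the proposal distribution. Concretely: for a fixed $\xpp$ and a fixed realization of $\xt_{2:K}$, apply the single-sample $e^{T-1}$ variational lower bound on a KL divergence (in the style of~\citet{barber2003algorithm}) to $\mathbb{E}_{p(\xp\mid\xpp)}\big[KL(p(\xt\mid\xp,\xpp)\,\|\,q_\xi(\xt\mid\xpp))\big]$, using the batch-dependent critic $T(\xpp,\xp,\xt_1;\xt_{2:K})=1+\log\{e^{\crip(\xpp,\xp,\xt_1)}/(\tfrac1K\sum_{k=1}^K e^{\crip(\xpp,\xp,\xt_k)})\}$, and then average over $\xt_{2:K}\sim q_\xi(\cdot\mid\xpp)$ and $\xpp\sim p(\xpp)$. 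The linear part of the variational bound returns $1$ plus the first term of $I_{\textit{VAR}}$, while the ``$e^{T-1}$'' part equals $\mathbb{E}[e^{\crip(\xpp,\xp,\xt_1)}/(\tfrac1K\sum_{k}e^{\crip(\xpp,\xp,\xt_k)})]$ with $\xt_1,\dots,\xt_K$ now i.i.d.\ $q_\xi(\cdot\mid\xpp)$; by exchangeability (symmetrizing over which of the $K$ indices occupies the numerator) this is exactly $1$, so the bound collapses to $\mathrm{M}_{q_\xi}\ge(\text{first term of }I_{\textit{VAR}})$. I expect this exchangeability/partition-function computation — together with checking that the relevant $q_\xi$-expectations are finite so the variational inequality is legitimate — to be the only real obstacle; everything else is bookkeeping.

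The three claims then follow quickly. Item 1 is immediate: $I_{\textit{VAR}}=(\text{first term})-\mathbb{E}_{p(\xpp)}[KL(p\|q_\xi)]\le\mathrm{M}_{q_\xi}-\mathbb{E}_{p(\xpp)}[KL(p\|q_\xi)]=\mi(\xp;\xt\mid\xpp)$. For Item 2, if $q_\xi(\cdot\mid\xpp)=p(\cdot\mid\xpp)$ then the KL term is $0$ and the first term of $I_{\textit{VAR}}$ is, verbatim, $I_{\textit{CNCE}}$ (its negatives are now drawn from $p(\cdot\mid\xpp)$), hence $I_{\textit{VAR}}=I_{\textit{CNCE}}$. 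For Item 3, $\sup_\crip I_{\textit{VAR}}\le\mi(\xp;\xt\mid\xpp)$ for every $K$ by Item 1; for the matching lower bound I would substitute the $I_{\textit{VAR}}$-optimal critic $\crip^{*}(\xpp,\xp,\xt)=\log\{p(\xt\mid\xp,\xpp)/q_\xi(\xt\mid\xpp)\}$ (finite a.e.\ since $p(\cdot\mid\xp,\xpp)\ll p(\cdot\mid\xpp)\ll q_\xi(\cdot\mid\xpp)$) and let $K\to\infty$: by the law of large numbers $\tfrac1K\sum_{k=2}^K e^{\crip^{*}(\xpp,\xp,\xt_k)}=\tfrac1K\sum_{k=2}^K p(\xt_k\mid\xp,\xpp)/q_\xi(\xt_k\mid\xpp)\to1$ while $\tfrac1K e^{\crip^{*}(\xpp,\xp,\xt_1)}\to0$, so (under the same mild integrability assumptions used for Proposition~1) the first term of $I_{\textit{VAR}}$ evaluated at $\crip^{*}$ tends to $\mathrm{M}_{q_\xi}$, whence $I_{\textit{VAR}}(\crip^{*})\to\mathrm{M}_{q_\xi}-\mathbb{E}_{p(\xpp)}[KL(p\|q_\xi)]=\mi(\xp;\xt\mid\xpp)$; combined with the upper bound this gives $\lim_{K\to\infty}\sup_\crip I_{\textit{VAR}}=\mi(\xp;\xt\mid\xpp)$.
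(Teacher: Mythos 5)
Your argument is correct and the underlying decomposition is the same as the paper's: the exact identity $\mi(\xp;\xt\mid\xpp) = \mathrm{M}_{q_\xi} - \mathbb{E}_{p(\xpp)}\big[KL\big(p(\cdot\mid\xpp)\,\|\,q_\xi(\cdot\mid\xpp)\big)\big]$ is precisely what the paper obtains by inserting $q_\xi/q_\xi$ inside the Barber--Agakov bound before splitting the logarithm, and your treatment of Items~2 and~3 (set $q_\xi = p$ so the KL vanishes; for the limit, substitute $\crip^{*}=\log\big(p(\xt\mid\xp,\xpp)/q_\xi(\xt\mid\xpp)\big)$, observe that the KL term is $\crip$-free, and apply the strong law of large numbers to $\tfrac1{K-1}\sum_{k\ge 2} p(\xt_k\mid\xp,\xpp)/q_\xi(\xt_k\mid\xpp)\to 1$) coincides with the paper's proof. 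The genuine difference is in the core step showing that the contrastive term is at most $\mathrm{M}_{q_\xi}$. The paper literally re-runs the Proposition~1 machinery with $q_\xi(\cdot\mid\xpp)$ in place of $p(\cdot\mid\xpp)$: it plugs the normalized-in-expectation resampling distribution $\bar q(\xt\mid\xpp,\xp)$ into Barber--Agakov and then applies Jensen. You instead invoke the NWJ-type inequality $KL(P\,\|\,Q)\ge\mathbb{E}_P[T]-\mathbb{E}_Q[e^{T-1}]$ with the batch-dependent critic $T = 1 + \log\big(e^{\crip(\xpp,\xp,\xt_1)}\big/\tfrac1K\sum_k e^{\crip(\xpp,\xp,\xt_k)}\big)$, kill the exponential term exactly via exchangeability of $\xt_{1:K}$ drawn i.i.d.\ from $q_\xi(\cdot\mid\xpp)$, and then average over the negatives and over $(\xp,\xpp)$. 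Both are standard, equivalent derivations of the InfoNCE-type inequality; yours is self-contained (no appeal to the separate normalization lemma for $\bar q$), whereas the paper's reuses Proposition~1 verbatim with a swapped proposal. One small attribution slip: the $e^{T-1}$ bound you use is the Nguyen--Wainwright--Jordan $f$-divergence bound, not Barber--Agakov's plug-in-$q$ bound (which is the one the paper actually uses), so the parenthetical citation is misplaced, though the math is fine.
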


where the first expectation is taken with respect to $p(\xp, \xpp, 
\xt_1)q_{\xi}(\xt_{2:K} | \xpp)$ and the second with respect to $p(\xpp)$. See Sec.~\ref{app:proof_variational} for the proof. Note that this bound side-steps the problem of requiring access to an arbitrary number of negative samples from the unknown $p(\xt | \xpp)$ by i.i.d. sampling from the known and tractable $q_\xi(\xt | \xpp)$. For example, $q_\xi$ can be a conditional flow-based image generation model~\citep{kingma2018glow} or a transformer language model for text~\citep{zhang2019dialogpt}. We prove that as the number of examples goes to $\infty$, optimizing the bound w.r.t. $\crip$ converges to the true conditional MI. Interestingly, this holds true for any $q_\xi$, though the choice of $q_\xi$ will most likely impact the convergence rate of the estimator.

Eq.~\ref{eq:variational-approx} is superficially similar to the ELBO (Evidence Lower BOund) objective used to train VAEs~\citep{vae}, where $q_\xi$ plays the role of the approximate posterior (although the KL direction in the ELBO is inverted). This parallel suggests that, assuming the variational family contains $p$, the optimal solution w.r.t. $\xi$ may not verify $p(\xt | \xpp) = q_\xi(\xt | \xpp)$ for all values of $K$ and $\phi$,~i.e. there could be solutions for which some of the KL divergence is traded for additional nats on the contrastive cost. However, we see trivially that if we ignore the dependency of the first expectation term on $q_\xi$ (i.e. we ``detach'' the gradient of the expectation w.r.t $\xi$) and only optimize $\xi$ to minimize the KL term, then it is guaranteed that $p(\xt | \xpp) = q_\xi(\xt | \xpp)$, for any $K$ and $\phi$. Thus, by the second property in Proposition~\ref{prop:ivar}, optimizing $I_{\text{\textit{VAR}}}(\phi, \xi^*, K)$ w.r.t. $\phi$ will correspond to optimizing $I_{\text{\textit{CNCE}}}$.

In practice, the latter observation significantly simplifies the estimation problem as one can minimize a Monte-Carlo approximation of the KL divergence w.r.t $\xi$ by standard supervised learning: we can efficiently approximate the KL by taking samples from $p(\xt | \xpp)$. Those can be directly obtained by using the joint samples from $p(x, \xt)$ included in the training set and computing $\xpp$ from $\x$.\footnote{The ability to perform that computation is usually a key assumption in self-supervised learning approaches.} However, maximizing $I_{\textit{VAR}}$ can still be challenging as it requires estimating a distribution over potentially high-dimensional inputs and efficiently sampling a large number of negative examples from it. In the next section, we provide an importance sampling approximation of $I_{\textit{CNCE}}$ that bypasses this issue.

\subsection{Importance Sampling Approximation}

The optimal critic for $I_{\textit{NCE}}$ is $\cri^*(\xpp, \xt) = \log \frac{p(\xt | \xpp)}{p(\xt)} + c(\xpp)$, for any $c$. Assuming access to $\cri^*(\xpp, \xt)$, it is possible to use importance sampling to produce approximate expectations from $p(\xt | \xpp)$. This is achieved by first sampling $\tilde\xt_{1:M} \sim p(\xt)$ and then resampling $K \le M$ ($K > 0$) examples i.i.d. from the normalized importance distribution $w_k = \frac{\exp \cri^*(\xpp, \tilde\xt_k)}{\sum_{{m=1}}^M \exp \cri^*(\xpp, \tilde\xt_m)}$. This process is also called ``sampling importance resampling'' (SIR) and we can write the corresponding distribution as $p_{\textit{SIR}}(y_k) = w_k \delta(\xt_k \in \tilde\xt_{1:M})p(\tilde\xt_{1:M})$. As $M/K \rightarrow \infty$, it is guaranteed to produce samples from $p(\xt | \xpp)$~\citep{rubin1987}.

The objective corresponding to this process is:
\begin{align}
\label{eq:sir}
I_{\textit{SIR}}(&\xp, \xt | \xpp, \crip, K) = \\  & \mathbb{E}_{p(\xpp, \xp, \xt_1)p_{\textit{SIR}}(\xt_{2:K})} \bigg[ \log \frac{e^{\crip(\xpp, \xp, \xt_1)}}{\frac{1}{K} \sum_{k=1}^K e^{\crip(\xpp, \xp, \xt_k)}} \bigg] \notag
\end{align}
Note the dependence of $p_{\textit{SIR}}$ on $w_k$ and hence $\cri^*$. SIR is known to increase the variance of the estimator~\citep{skare2003improved} and is wasteful given that only a smaller set of $K < M$ examples are actually used for MI estimation.

To provide a cheap approximation of the SIR estimator, we split the denominator of Eq.~\ref{eq:sir} into a positive term involving $\xt_1$ and a sum of contributions coming from negative examples $\xt_{2:K}$, and we rewrite the latter as an average $(K-1) \sum_{k=2}^K \frac{1}{K-1} e^{\crip(\xpp, \xp, \xt_k)}$. Now, we can use the normalized importance weights $w_k$ to estimate that term under the resampling distribution. Formally, we have the following approximation:

\begin{restatable}[\textbf{Importance Sampled $I_{\textit{CNCE}}$}]{prop}{is} Assuming $\cri^* = \arg\sup_\cri I_{\textit{NCE}}(\xpp, \xt)$ and $w_k = \frac{\exp \cri^*(\xpp, \xt_k)}{\sum_{{k=2}}^M \exp \cri^*(\xpp, \xt_m)}$, we have the following two properties, where:
\begin{align}
&I_{\textit{IS}}(\xp, \xt | \xpp, \crip, K) = \notag \\ &\mathbb{E} \left[ \log \frac{e^{\crip(\xpp, \xp, \xt_1)}}{\frac{1}{K} (e^{\crip(\xpp, \xp, \xt_1)} + (K - 1) \;{\sum_{k=2}^K w_k e^{\crip(\xpp, \xp, \xt_k)}})} \right],
\label{eq:cond_imp_samp}
\end{align}
\begin{enumerate}
\item $\lim_{K \rightarrow \infty} \sup_\crip I_{\textit{IS}}(\xp; \xt | \xpp, \crip, K) = \mi(\xp; \xt | \xpp),$
\item $\lim_{K \rightarrow \infty} \arg\sup_\crip I_{\textit{IS}} = \log \frac{p(y | \xpp, \xp)}{p(y | \xpp)} + c(\xp, \xpp)$.
\end{enumerate}
\label{prop:iis}
\end{restatable}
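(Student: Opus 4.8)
The plan is to prove both limits by computing, for each fixed critic $\crip$, the $K\to\infty$ limit of $I_{\textit{IS}}(\xp, \xt | \xpp, \crip, K)$ and recognizing it as the conditional Donsker--Varadhan functional, whose supremum over $\crip$ equals $\mi(\xp; \xt | \xpp)$ and is attained at the claimed critic. The crux is that the optimal InfoNCE critic $\criopt(\xpp, \xt) = \log\frac{p(\xt | \xpp)}{p(\xt)} + c(\xpp)$ recalled above makes $e^{\criopt}$ proportional to the likelihood ratio $p(\xt | \xpp)/p(\xt)$, so the self-normalized weights $w_k$ turn empirical averages over negatives $\xt_k \sim p(\xt)$ into conditional expectations under $p(\xt | \xpp)$; asymptotically $I_{\textit{IS}}$ therefore coincides with the conditional InfoNCE objective whose negatives are drawn from the true $p(\xt|\xpp)$, for which the stated value and optimizer are already understood.

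\textbf{Step 1 (concentration of the denominator).} Substituting the definition of $w_k$, the negative part of the denominator in Eq.~\ref{eq:cond_imp_samp} reads
\[
(K-1)\sum_{k=2}^{K} w_k\, e^{\crip(\xpp, \xp, \xt_k)} = (K-1)\,\frac{\frac{1}{K-1}\sum_{k=2}^{K} e^{\criopt(\xpp, \xt_k) + \crip(\xpp, \xp, \xt_k)}}{\frac{1}{K-1}\sum_{m=2}^{K} e^{\criopt(\xpp, \xt_m)}},
\]
with the $\xt_k$ i.i.d.\ from $p(\xt)$. Under a mild integrability assumption on $\crip$ (e.g.\ boundedness), the law of large numbers sends the numerator average to $\expect_{p(\xt)}[e^{\criopt(\xpp,\xt)+\crip(\xpp,\xp,\xt)}]$ and the normalizer to $\expect_{p(\xt)}[e^{\criopt(\xpp,\xt)}]$. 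Writing $e^{\criopt(\xpp,\xt)} = e^{c(\xpp)}\, p(\xt|\xpp)/p(\xt)$, the factor $e^{c(\xpp)}$ cancels and the change of measure gives $\expect_{p(\xt)}[e^{\criopt}]=e^{c(\xpp)}$ and $\expect_{p(\xt)}[e^{\criopt+\crip}] = e^{c(\xpp)}\,\expect_{p(\xt|\xpp)}[e^{\crip(\xpp,\xp,\xt)}]$, so the ratio tends to $\expect_{p(\xt|\xpp)}[e^{\crip(\xpp,\xp,\xt)}]$. Hence the full denominator $\tfrac1K e^{\crip(\xpp,\xp,\xt_1)} + \tfrac{K-1}{K}\cdot(\text{that ratio})$ converges almost surely to $\expect_{p(\xt|\xpp)}[e^{\crip(\xpp,\xp,\xt)}]$.

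\textbf{Step 2 (limiting functional and property 2).} Taking logarithms inside Eq.~\ref{eq:cond_imp_samp} and passing the limit through the outer expectation over $p(\xp,\xpp,\xt_1)$ — justified by dominated convergence under the same assumption, since the integrand is then uniformly bounded in $K$ — gives, for every fixed $\crip$,
\[
\lim_{K\to\infty} I_{\textit{IS}}(\xp,\xt|\xpp,\crip,K) = \expect_{p(\xp,\xpp)}\!\Big[\,\expect_{p(\xt|\xp,\xpp)}[\crip(\xpp,\xp,\xt)] - \log\expect_{p(\xt|\xpp)}[e^{\crip(\xpp,\xp,\xt)}]\,\Big].
\]
For each $(\xp,\xpp)$ the bracketed expression is the Donsker--Varadhan lower bound on $KL\big(p(\cdot|\xp,\xpp)\,\|\,p(\cdot|\xpp)\big)$, so it is $\le KL\big(p(\cdot|\xp,\xpp)\,\|\,p(\cdot|\xpp)\big)$ with equality exactly when $\crip(\xpp,\xp,\xt) = \log\frac{p(\xt|\xpp,\xp)}{p(\xt|\xpp)} + c(\xp,\xpp)$. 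Averaging over $p(\xp,\xpp)$ and using $\expect_{p(\xp,\xpp)}[KL(p(\cdot|\xp,\xpp)\|p(\cdot|\xpp))] = \mi(\xp;\xt|\xpp)$, the limiting functional is bounded by $\mi(\xp;\xt|\xpp)$ and attains it, uniquely up to the additive $c(\xp,\xpp)$, at the claimed critic; this is property 2.

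\textbf{Step 3 (property 1 and the main obstacle).} Property 1 reduces to interchanging $\sup_\crip$ with $\lim_K$. The easy direction is immediate: evaluating at $\crip^\star = \log\frac{p(\xt|\xpp,\xp)}{p(\xt|\xpp)} + c$ yields $\liminf_K \sup_\crip I_{\textit{IS}} \ge \lim_K I_{\textit{IS}}(\crip^\star, K) = \mi(\xp;\xt|\xpp)$ by Steps~1--2. The genuinely delicate step — the one I expect to cost the most care — is the matching bound $\limsup_K \sup_\crip I_{\textit{IS}}(\crip,K) \le \mi(\xp;\xt|\xpp)$, which cannot be read off a finite-$K$ inequality because, unlike $I_{\textit{CNCE}}$ or $I_{\textit{VAR}}$, $I_{\textit{IS}}$ need not lie below $\mi$ at finite $K$ (the self-normalized weights are only asymptotically unbiased). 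I would discharge it by upgrading the pointwise law-of-large-numbers convergence of Step~1 to a uniform one over the critic class — e.g.\ a uniformly bounded, equicontinuous family, which is effectively the situation for the parametric critics used in practice — so that $\sup_\crip$ commutes with $\lim_K$ and the limit is pinned to $\sup_\crip$ of the Donsker--Varadhan functional, namely $\mi(\xp;\xt|\xpp)$.
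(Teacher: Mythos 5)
Your Steps 1--2 reproduce, for fixed $\crip$, exactly the content of the paper's Lemma~1 (the paper states and proves that lemma for a \emph{sequence} $\crip_K$ converging uniformly to $\crip$, with your case recovered as $\crip_K\equiv\crip$), and your identification of the limiting functional as the conditional Donsker--Varadhan bound, with optimizer $\log\frac{p(\xt|\xpp,\xp)}{p(\xt|\xpp)}+c(\xp,\xpp)$, is correct and gives the easy $\liminf$ direction and the characterization of the limiting $\arg\sup$.

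Step~3 is where your route and the paper's genuinely diverge, and where yours is still only a plan. The paper does \emph{not} upgrade the SLLN to a uniform one over a critic class. It argues by contradiction: suppose $\limsup_K\sup_\crip I_{\textit{IS}}\ge \mi(\xp;\xt|\xpp)+2\epsilon$; pick near-optimizers $\crip_K$; exploit the invariance of $I_{\textit{IS}}$ under adding any function of $(\xpp,\xp)$ to normalize $\mathbb{E}_{p(\xt)}[\crip_K(\xpp,\xp,\cdot)]=0$; use compactness of $\bar{\mathbb{R}}^{|\mathcal{X}|^2|\mathcal{Y}|}$ (so this argument implicitly assumes finite alphabets) to extract a uniformly convergent subsequence $\crip_{K_j}\to\crip_\infty$; argue $\|\crip_\infty\|_\infty<\infty$, since otherwise the normalization forces $\crip_\infty=-\infty$ somewhere and then $I_{\textit{IS}}=-\infty$; and finally invoke Lemma~1 along the subsequence to get $\lim_j I_{\textit{IS}}(\crip_{K_j},K_j)=\lim_j I_{\textit{CNCE}}(\crip_\infty,K_j)\le \mi(\xp;\xt|\xpp)$, a contradiction. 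This is why the lemma is stated for sequences $\crip_K\to\crip$ rather than fixed critics. Your alternative --- a uniform SLLN over a bounded equicontinuous family --- is a coherent strategy (Glivenko--Cantelli flavor, with Arzel\`a--Ascoli standing in for the paper's finite-dimensional compactness), but it has two gaps you should close: (i) you never show why such uniformity holds, which is precisely the hard analysis you flagged, and (ii) constraining $\crip$ to a \emph{fixed} bounded class changes the supremum on the right-hand side of Property~1 --- unless the class contains $\log\frac{p(\xt|\xpp,\xp)}{p(\xt|\xpp)}$ up to additive $c(\xp,\xpp)$, $\sup_\crip$ of the limiting functional will fall short of $\mi(\xp;\xt|\xpp)$. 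The paper's normalization trick is what lets it take the supremum over all critics without such a restriction, and your plan omits that ingredient.
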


where the expectation is with respect to $p(\xpp, \xp, \xt_1)p(\xt_{2:K})$. The proof can be found in Sec.~\ref{app:proof_is}. $I_{\textit{IS}}$ skips the resampling step by up-weighting the negative contribution to the normalization term of examples that have large probability under the resampling distribution,~i.e.~that have large $w_k$. As detailed in the appendix, this approximation is cheap to compute given that the negative samples are sampled from the marginal distribution $p(\xt)$ and we avoid the need for the resampling step. We hypothesize that $I_{\textit{IS}}$ has less variance than $I_{\textit{SIR}}$ as it does not require the additional resampling step. The proposition shows that as the number of negative examples goes to infinity, the proposed approximation converges to the true value of the conditional MI, and, in the limit of $K \to \infty$, optimizing $I_{\textit{IS}}$ w.r.t. $\crip$ converges to the conditional MI and the optimal $\crip$ converges to the optimal $I_{\textit{CNCE}}$ solution.

\subsection{Boosted Critic Approximation}
Proposition~\ref{prop:iis} shows that the optimal critic $\crip^*$ estimates the desired log ratio only in the limit of $K \to \infty$. Hereafter, we generalize the results presented in~\citet{ma2018noise} and show that we can accurately estimate the conditional log-ratio with the following proposition.

\begin{restatable}[\textbf{Boosted Critic Estimation}]{prop}{bo} Assuming $\cri^* = \arg\sup_\cri I_{\textit{NCE}}(\xup, \xt)$, the following holds, with:
\begin{align}
I_{\textit{BO}}(\x, \xt | \xup, \phi, K) = \mathbb{E} \bigg[ \log \frac{e^{\cri^*(\xup, \xt_1) + \phi(\xup, \x, \xt_1)}}{\frac{1}{K}\sum_{k=1}^K e^{\cri^*(\xup, \xt_k) + \phi(\xup, \x, \xt_k)}} \bigg],
\label{eq:cond_bo}
\end{align}
\vspace*{-8mm}
\begin{enumerate}
\item $I_{\textit{BO}} \le \mi(\x, \xup; \xt)$,
\item $\crip^* = \arg\sup_\crip I_{\textit{BO}} = \log \frac{p(\xt | \xpp, \xp)}{p(\xt | \xpp)} + c(\xp, \xpp)$.
\end{enumerate}
\label{prop:bo}
\end{restatable}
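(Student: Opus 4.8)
The plan is to recognize $I_{\textit{BO}}$ as nothing more than the ordinary \INCE{} bound of Eq.~\ref{eq:infonce} applied to the single compound view $(\xp,\xpp)$ versus $\xt$, with negatives $\xt_{2:K}$ drawn from the marginal $p(\xt)$ and with the \emph{particular} critic obtained by adding the learnable term $\crip(\xpp,\xp,\xt)$ to the frozen optimal \INCE{} critic $\cri^*(\xpp,\xt)$. Concretely, choosing the critic in Eq.~\ref{eq:infonce} --- with the role of $\x$ there played by the pair $(\xp,\xpp)$ --- to be $\chi(\xpp,\xp,\xt)\eqdef\cri^*(\xpp,\xt)+\crip(\xpp,\xp,\xt)$ reproduces Eq.~\ref{eq:cond_bo} term for term, since the expectation in $I_{\textit{BO}}$ is already taken over $p(\xpp,\xp,\xt_1)p(\xt_{2:K})$. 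Everything then reduces to two facts about \INCE{} that are established earlier.

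For property~1 we invoke that $I_{\textit{NCE}}(A;B\mid\cri,K)\le\mi(A;B)$ holds for \emph{every} critic $\cri$ and every $K\ge1$ (the Appendix derivation, following~\citet{barber2003algorithm}). Applying this with $A=(\xp,\xpp)$, $B=\xt$ and $\cri=\chi$ yields $I_{\textit{BO}}(\xp,\xt\mid\xpp,\crip,K)\le\mi(\xp,\xpp;\xt)$ immediately; no optimality of $\cri^*$ and no property of $\crip$ is needed here.

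For property~2 the key move is a change of variables in the optimization: as $\crip$ ranges over all functions of $(\xpp,\xp,\xt)$, the compound critic $\chi=\cri^*+\crip$ ranges over all functions of $(\xpp,\xp,\xt)$ as well, since $\cri^*$ is a fixed function. Hence $\sup_{\crip}I_{\textit{BO}}=\sup_{\chi}I_{\textit{NCE}}((\xp,\xpp);\xt\mid\chi,K)$, whose maximizer is the \INCE{}-optimal critic for views $(\xp,\xpp)$ and $\xt$, i.e.\ $\chi^*(\xpp,\xp,\xt)=\log\frac{p(\xt\mid\xp,\xpp)}{p(\xt)}+c_1(\xp,\xpp)$ for an arbitrary function $c_1$. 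The optimal boosting term is therefore $\crip^*=\chi^*-\cri^*$; substituting the \INCE{}-optimal critic of the pair $(\xpp,\xt)$, namely $\cri^*(\xpp,\xt)=\log\frac{p(\xt\mid\xpp)}{p(\xt)}+c_2(\xpp)$, the common $\log p(\xt)$ cancels and we obtain $\crip^*(\xpp,\xp,\xt)=\log\frac{p(\xt\mid\xpp,\xp)}{p(\xt\mid\xpp)}+c(\xp,\xpp)$ with $c=c_1-c_2$, exactly the claimed expression.

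The one step demanding care --- and the main (if minor) obstacle --- is the bookkeeping of additive constants: each \INCE{} objective is invariant under $\cri(a,b)\mapsto\cri(a,b)+c(a)$, so $\cri^*$ is determined only up to an additive function of $\xpp$ and $\chi^*$ only up to an additive function of $(\xp,\xpp)$. We must check that this ambiguity is harmless, i.e.\ that $\chi^*-\cri^*$ still lies in the family $\{\log\tfrac{p(\xt\mid\xpp,\xp)}{p(\xt\mid\xpp)}+c(\xp,\xpp)\}$ for every admissible choice of $c_1,c_2$ --- which holds because $c_1(\xp,\xpp)-c_2(\xpp)$ is again an arbitrary function of $(\xp,\xpp)$. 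A secondary point to state explicitly is the usual regularity assumption (existence of the densities involved) under which $\cri^*$ actually attains the \INCE{} supremum, taken for granted here as in the \INCE{} derivation.
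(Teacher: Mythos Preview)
Your proposal is correct and follows essentially the same route as the paper: recognize $I_{\textit{BO}}$ as an instance of $I_{\textit{NCE}}$ on the compound view $(\xp,\xpp)$ versus $\xt$ with critic $\chi=\cri^*+\crip$, invoke the \INCE{} lower-bound for part~1, and for part~2 subtract $\cri^*$ from the known \INCE{}-optimal critic of the pair $((\xp,\xpp),\xt)$. The paper's proof is terser (it cites \citet{ma2018noise} for the form of the optimal combined critic and then says ``simple algebra''), whereas you spell out the change-of-variables and the bookkeeping of additive constants explicitly, but the argument is the same.
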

where the expectation is with respect to $p(\x, \xup, \xt_1)p(\xt_{2:K})$. The proof is straightforward and is in Sec.~\ref{proof:bo}.

We refer to Eq.~\ref{eq:cond_bo} as \textit{boosted critic estimation} due to the fact that optimizing $\phi$ captures residual information not expressed in $\psi^*$. Perhaps surprisingly, $I_{\textit{BO}}$ provides an almost embarrassingly simple way of estimating the desired log-ratio \textit{for any} $K$. It corresponds to estimating an InfoNCE like bound, where negative samples come from the easily-sampled marginal $p(y)$ and the critic is shifted by the optimal critic for $I_{\textit{NCE}}(\xup, \xt)$. However, this comes at the cost of not having a valid approximation of the conditional MI. Indeed, by 1., $I_{\textit{BO}}$ is a lower-bound on the total MI, not on the conditional MI. As we show in the next section, we can get an estimate of the conditional MI by using $I_{\textit{BO}}$ to estimate the conditional critic in an accurate manner and $I_{\textit{IS}}$ to evaluate the conditional MI.
\newcommand\mysubfig[1]{\hspace*{1cm}{#1}}
\begin{figure*}[tbp]
    %\centering
\mysubfig{\adjustbox{trim={.0\width} {.15\height} {.0\width} {.04\height},clip}%
{\includegraphics[width=.87\linewidth]{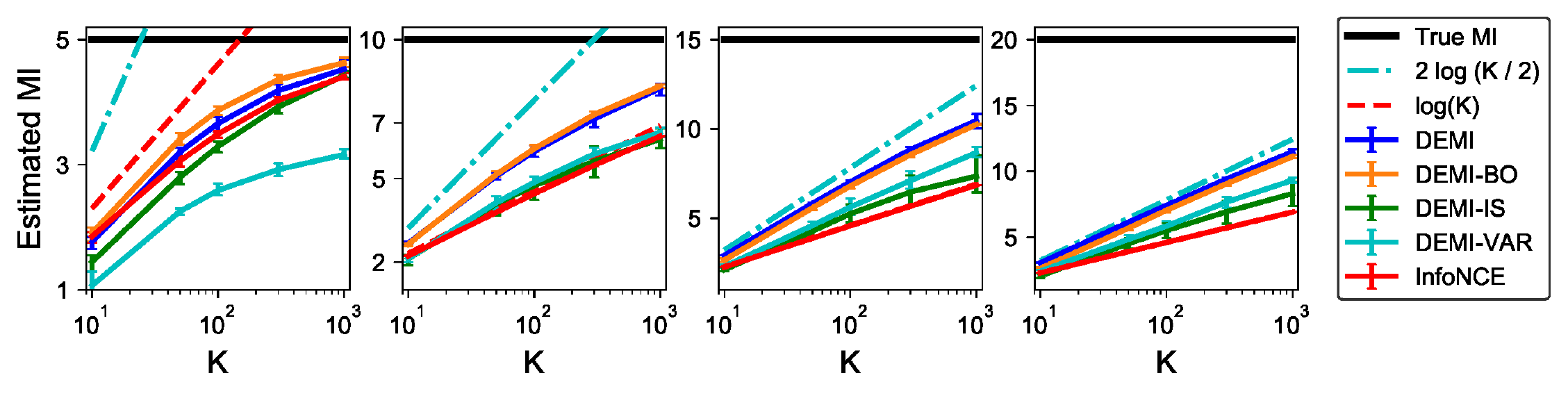}}}
\mysubfig{\adjustbox{trim={.0\width} {.06\height} {.0\width} {.0\height},clip}{\includegraphics[width=.9\linewidth]{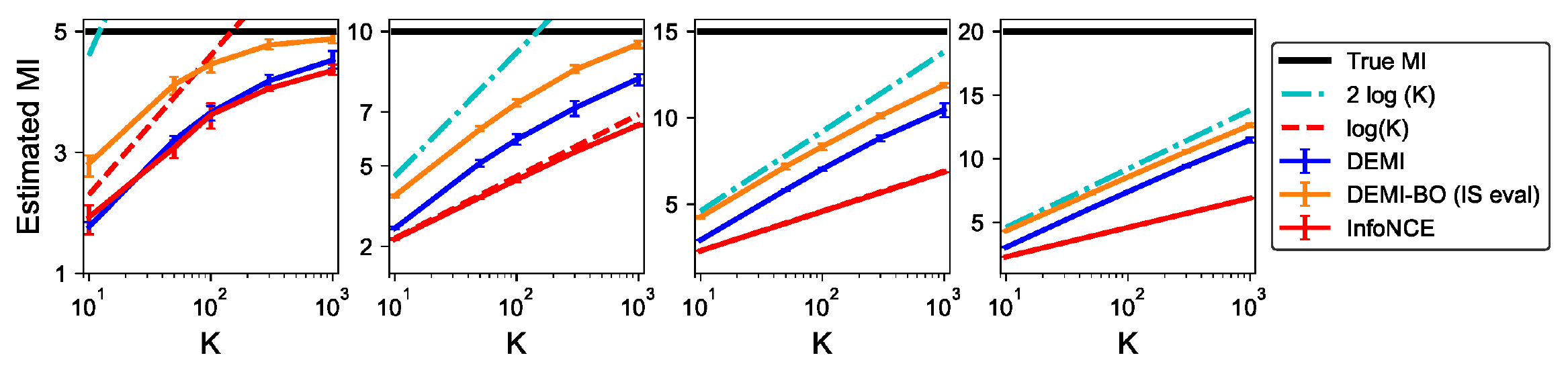}}}
\caption{
    Estimation of $I(\x, \xup; \xt)$ for three Gaussian covariates $\xp, \xpp, \xt$ as function of the number of negative samples $K$. (\textbf{top}) DEMI maximizes $I_{\textit{DEMI}}$ with $K / 2$ examples for unconditional and conditional bounds ($K$ total) and assume access to the ground-truth $p(y | \xp)$. DEMI-IS learns the conditional critic using $I_{\textit{IS}}$, DEMI-BO using $I_{\textit{BO}}$, DEMI-VAR using $I_{\textit{VAR}}$. We plot the total MI estimated by $I_{\textit{DEMI}}$ when learning the conditional critics using our approximations. We see that (1) DEMI captures more MI than InfoNCE for the same number of $K$ and (2) $I_{\textit{BO}}$ accurately estimates the conditional critic without access to samples from $p(y | \xup)$ while $I_{\textit{IS}}$ suffers from significant variance. (\textbf{bottom}) We assess whether we can form a good estimator of the total MI \textit{without} access to $p(y | \xup)$ neither at training nor at evaluation time. Here, DEMI-BO trains the conditional critic by $I_{\textit{BO}}$ and evaluates the total MI by $I_{\textit{NCE}} + I_{\textit{IS}}$.}
    \label{fig:bounds-learned-critic}
    \vspace*{-2.0mm}
\end{figure*}

\vspace{-3mm}
\section{Experiments}
The goal of our experiments is two-fold: (1) to test whether DEMI leads to a better estimator of the total MI, and whether our proposed conditional MI approximations are accurate; (2) to test whether DEMI helps in estimating better representations for natural data. We verify (1) in a synthetic experiment where we control the total amount of MI between Gaussian covariates. Then, we verify (2) on a self-supervised image representation learning domain and explore an additional application to natural language generation in a sequential setting: conversational dialogue.

\subsection{Synthetic Data}
\label{sec:empirical_validation}
We extend~\citet{poole2019variational}'s two variable setup to three variables. We posit that $\{\xp, \xpp, \xt\}$ are three Gaussian co-variates, $\xp, \xpp, \xt \sim \mathcal N(0, \Sigma)$ and we choose $\Sigma$ such that we can control the total mutual information $\mi(\xp, \xpp; \xt)$, $\mi \in \{5, 10, 15, 20\}$ (see Appendix for pseudo-code and details of the setup). We aim to estimate the total MI $\mi(\xp, \xpp; \xt)$ and compare the performance of our approximators in doing so. We limit this investigation to contrastive estimators although other estimators and non lower-bounds exist (e.g. DoE~\citep{pmlr-v108-mcallester20a}). For more details see App.~\ref{app:synthetic}.

In Figure~\ref{fig:bounds-learned-critic} (top), we compare the estimate of the MI obtained by InfoNCE and DEMI, which maximizes $I_{\textit{DEMI}}$ (Eq.~\ref{eq:sep_info_nce}). To be comparable with InfoNCE in terms of total number of negative examples used, DEMI uses half as many negative examples for computing each term in the sum ($K / 2$). For all amounts of true MI, and especially for larger amounts, DEMI can capture more nats than InfoNCE with an order of magnitude less examples. We also report the upper-bounds on InfoNCE ($\log K$) and DEMI ($2 \log K / 2$).

Maximizing $I_{\textit{DEMI}}$ assumes access to negative samples from $p(\xt | \xup)$, which is an unrealistic assumption in practice. To verify the effectiveness of our approximations, we train the conditional critics using $I_{\textit{BO}}$ (DEMI-BO), $I_{\textit{IS}}$ (DEMI-IS) and $I_{\textit{VAR}}$ (DEMI-VAR) and we evaluate the total MI using $I_{\textit{DEMI}}$ (we assume access to $p(y | \xup)$ only at evaluation time). This allows us to verify whether it is possible to reliably estimate the conditional critic in the absence of negative samples from $p(\xt | \xup)$. It is interesting to note how the critic learnt by $I_{\textit{IS}}$ suffers high variance and does not lead to a good estimate of the total MI when evaluated with $I_{\textit{CNCE}}$. DEMI-VAR still outperforms InfoNCE for higher values of total MI, but seems to suffer in the case of small MIs. For this experiment, we update $q_\xi$ at the same rate as $\crip$. Improvements could be obtained by updating $q_\xi$ more frequently, similarly to the asynchronous updates successfully used in the GAN literature~\citep{mescheder2018training}. $I_{\textit{BO}}$ accurately estimates the critic.

In Figure~\ref{fig:bounds-learned-critic} (bottom), we show that it is possible to obtain an estimate of the total MI without access to $p(\xt | \xup)$ neither at training nor evaluation time. We first learn the conditional critic using $I_{\textit{BO}}$ and compute $I_{\textit{NCE}} + I_{\textit{IS}}$ by using the estimated critic. Figure~\ref{fig:bounds-learned-critic} (bottom) reports the results. For this experiment, we share the same set of $K$ negative examples for both conditional and unconditional MI and therefore we report the upper bound $2 \log K$.

\begin{table*}[t]
\small
\centering
\caption{Accuracy for self-supervised learning on Imagenet-100 (IN100) and on full Imagenet (IN1K), measured by linear evaluation. $x \leftrightarrow y$ denotes standard contrastive matching between views. In DEMI, we use the same base InfoMin architecture but augments the loss function with conditional MI maximization across views. InfoMin (multi) considers $\xup$ just as an additional view and therefore discards conditional MI maximization. All models use a standard Resnet-50 and are trained for 200 epochs. The right part of the table reports transfer learning performance of our model trained on IN1K.}
\begin{tabu}to\linewidth{@{}X[4,l]X[1.4,c]X[c]X[c]*5{X[c]}X[1.7,c]@{}}
\toprule
\textbf{Model} & \textbf{Views} & \textbf{IN100} & \textbf{IN1K} & \textbf{STL10} & \textbf{C10} & \textbf{C100} & \textbf{CARS} & \textbf{CUB} & \textbf{FLOWERS} \\
\cmidrule(r){1-4}
\cmidrule(l){5-10}
SimCLR~\citep{chen2020simple} & $\p \leftrightarrow \xt$ &  - & 66.6 & - & 90.6 & 71.6 & 50.3 & - & 91.2 \\
MocoV2~\citep{chen2020improved} & $\p \leftrightarrow \xt$ &  - & 67.5 & - & - & - & - & - & - \\
InfoMin~\citep{tian2020makes} & $\p \leftrightarrow \xt$ & 74.9 & 70.1 & 96.2 & 92.0 & 73.2 & 48.1 & 41.7 & 93.2 \\
InfoMin (multi) & $\x, \xpp \leftrightarrow \xt$ & 77.2 & 70.2 & 95.9 & 92.6 & 74.5 & 49.2 & 42.1 & 94.7\\
\cmidrule(r){1-4}
\cmidrule(l){5-10}
% InfoMin-$\textit{D}$ ($\mathtt{cut}$) & $\x, \xpp \leftrightarrow \xt$ & 78.2 & 70.6 & \\
DEMI & $\x, \xpp \leftrightarrow \xt$ & \textbf{78.6} & \textbf{70.8} & \textbf{96.4} & \textbf{92.8} & \textbf{75.0} & \textbf{51.8} & \textbf{43.6} & \textbf{95.0} \\

\bottomrule
\end{tabu}
\label{tab:c10}
\end{table*}

\subsection{Vision}
% We report results on ImageNet and CIFAR-10.
\subsubsection{ImageNet}
\paragraph{Setup} \textls[-5]{We study self-supervised learning of image representations using $224{\times}224$ images from ImageNet~\citep{imagenet_cvpr09}. The evaluation is performed by fitting a linear classifier to the task labels using the pre-trained representations only, that is, we fix the weights of the pre-trained image encoder~$f$. We build upon InfoMin~\citep{tian2020makes}. All hyperparameters for training and evaluation are the same as in~\citet{tian2020makes}. All models use a momentum-contrastive memory buffer of $K = 65536$ examples~\citep{chen2020improved}. All models use a Resnet50 backbone and are trained for 200 epochs. We report transfer learning performance by freezing the encoder on STL-10, CIFAR-10
and CIFAR-100 \citep{krizhevsky2009learning}, Stanford Cars \citep{Krause2013CollectingAL}, Caltech-UCSD Birds (CUB) \citep{welinder2010caltech} and Oxford 102 Flowers \citep{zissermannflowers}.}
\vspace{-4mm}
\paragraph{Views} Each input image is independently augmented into two views $x$ and $y$ using a stochastically applied transformation following~\citet{tian2020makes}. This uses random resized crop, color jittering, gaussian blur, rand augment, color dropping, and jigsaw as augmentations. We experiment two ways of creating the subview $\xup$ of $\x$: $\mathtt{cut}$, which applies cutout to $x$, and $\mathtt{crop}$ which is inspired by~\citet{caron2020unsupervised} and consists in cropping the image aggressively and resizing the resulting crops to $96{\times}96$. \textcompress{To do so, we use the $\mathtt{RandomResizedCrop}$ from the $\mathtt{torchvision.transforms}$ module with $\mathtt{s=(0.05, 0.14)}$.}

\vspace{-4mm}
\paragraph{Models} Our baseline, InfoMin, maximizes $I_{\textit{NCE}}(\xp, \xt)$. We also report an enhanced baseline InfoMin (multi), which maximizes $I_{\textit{NCE}}(\xp, \xt) + I_{\textit{NCE}}(\xup, \xt)$ and aims to verify whether additional gains can be obtained by estimating conditional MI rather than just using $\xup$ as an additional view. We recur to $I_{\textit{BO}}$ to estimate the conditional critic\footnote{Although not reported explicitly, we found that $I_{\textit{IS}}$ leads very similar performance with a slightly higher variance across seeds.}. DEMI maximizes four terms: $I_{\textit{NCE}}(\xup; \xt) + I_{\textit{BO}}(\xp; \xt | \xup) + I_{\textit{NCE}}(\x; \xt) + I_{\textit{BO}}(\xup; \xt | \x)$. This  corresponds to maximizing both decompositions of the joint $\mi(\x, \xup; \xt)$. Differently from MI estimation, we found to be important for representation learning to maximize both decompositions, which include $I_{\textit{NCE}}(\x; \xt)$ in the objective. The computation of the conditional MI terms can be efficiently done by reusing the logits of the two unconditional MI (Listing~\ref{lst:code}).
\vspace{-4mm}
\paragraph{Results} Table 1 reports the average accuracy of linear evaluations obtained by 3 pretraining seeds. DEMI obtains 3.7\% improvement (78.6$\pm$0.2) compared to the baseline InfoMin for Imagenet100 (IN100) and 0.7\% (70.8$\pm$0.1) for full Imagenet (IN1K). Although not reported, the $\mathtt{crop}$ strategy performs better than the $\mathtt{cut}$ strategy (which obtains 70.5$\pm$0.1 on average IN1K). One hypothesis is that cutout introduces image patches that do not follow the pixel statistics in the corpus. InfoMin (multi) ablates conditional MI maximization and shows that introducing the additional view is helpful in low-data setting such as IN100, but can only slightly improve performance in IN1K. It is interesting to note that DEMI improves transfer learning performance the most in the fine-grained classification benchmarks CARS and CUB, where it is particularly important to capture detailed information about the input image~\citep{yang2018learning}. This serves as indication that the representations learnt by DEMI can extract more information about each input.

\definecolor{codegreen}{rgb}{0,0.6,0}
\definecolor{codegray}{rgb}{0.5,0.5,0.5}
\definecolor{codepurple}{rgb}{0.58,0,0.82}
\definecolor{backcolour}{rgb}{0.95,0.95,0.92}
\begin{lstlisting}[
float=tp,
language=Python,
floatplacement=tbp,
xleftmargin=2em,
frame=single,
framexleftmargin=1.5em,
backgroundcolor=\color{backcolour},
belowskip=-2\baselineskip,
commentstyle=\color{codegreen},
keywordstyle=\color{magenta},
numberstyle=\tiny\color{codegray},
stringstyle=\color{codepurple},
backgroundcolor=\color{backcolour},
commentstyle=\color{codegreen},
basicstyle=\ttfamily\scriptsize,
breakatwhitespace=false,         
numbers=left,                    
breaklines=true,                 
captionpos=b,                    
keepspaces=true,                 
numbersep=5pt,                  
showspaces=false,                
showstringspaces=false,
showtabs=false,                  
tabsize=2,
label={lst:code},
caption=PyTorch-style pseudo-code for DEMI in InfoMin. We use $I_{\textit{BO}}$ to estimate the critic for conditional MI.]
def compute_demi(x, xp, y, f, f_ema, g, g_bo):
  f_x, f_xp, k_y = f(x), f(xp), g(f_ema(y))
  # NCE heads
  q_x, q_xp = g(f_x), g(f_xp)
  # conditional NCE heads
  q_bo_x, q_bo_xp = g_bo(f_x), g_bo(f_xp)
  # compute NCE critics
  s_x_y = dot(q_x, cat(k_y, memory))
  s_xp_y = dot(q_xp, cat(k_y, memory))
  # compute conditional NCE critics
  s_bo_xp_y = dot(q_bo_xp, cat(k_y, memory))
  s_bo_x_y = dot(q_bo_x, cat(k_y, memory))
  # compute NCE bounds
  nce_x_y = -log_softmax(s_x_y)[0]
  nce_xp_y = -log_softmax(s_xp_y)[0]
  # compute BO estimator
  bo_x_xp = -log_softmax(
    s_xp_y.detach() + s_bo_x_y)[0]
  bo_xp_x = -log_softmax(
    s_x_y.detach() + s_bo_xp_y)[0]
  return (nce_x_y + nce_xp_y + bo_x_xp + bo_xp_x)
\end{lstlisting}

\subsubsection{CIFAR-10}
We also experiment on CIFAR-10 building upon SimCLR~\citep{chen2020improved}, which uses a standard ResNet-50 architecture by replacing the first $7{\times}7$ Conv of stride 2 with $3{\times}3$ Conv of stride 1 and also remove the max pooling operation. In order to generate the views, we use Inception crop (flip and resize to $32{\times}32$) and color distortion. We train with learning rate 0.5, batch-size 800, momentum coefficient of 0.9 and cosine annealing schedule. Our energy function is the cosine similarity between representations scaled by a temperature of 0.5~\citep{chen2020improved}. We obtain a top-1 accuracy of 94.7\% using a linear classifier compared to 94.0\% reported in~\citet{chen2020improved} and 95.1\% for a supervised baseline with same architecture.

\subsection{Dialogue}
\vspace{-1mm}
\paragraph{Setup} We experiment with language modeling task on the Wizard of Wikipedia (WoW) dataset~\citep{dinan2018wizard}. We evaluate our models using  automated metrics and human evaluation. For automated metrics, we report perplexity (ppl), BLEU~\citep{papineni2002bleu}. We report a comprehensive set of metrics in the Appendix (Sec~\ref{app:dialog}). We build upon GPT2~\citep{radford2019language}, and fine-tune it by language modeling (LM) on the dialogue corpus. In addition to the LM loss, we maximize MI between representations of the past and future utterances in each dialogue,~i.e. the predictive coding framework~\citep{elias1955predictive,mcallester2018formal}. We consider past and future in a dialogue as views of the same conversation. Given $L$ utterances $(\x_1, \ldots, \x_L)$, we set $\xt = (x_{k + 1}, \ldots, x_L)$, $\x = (x_1, \ldots, x_k)$ and $\xup = \x_k$, where $(.)$ denotes concatenation and $k$ is randomly chosen between $2 < k < L$. The goal is therefore to imbue representations with information about the future that cannot be solely explained by the most recent utterance $\xup$. The representations of past and future are the state corresponding to the last token in the last layer in GPT2.
\vspace{-3mm}
\paragraph{Models} \textls[-6]{We evaluate our introduced models  against different baselines. GPT2 is a basic small pre-trained model fine-tuned on the dialogue corpus. TransferTransfo ~\citep{wolf2019transfertransfo} augments the standard next-word prediction loss in GPT2 with the next-sentence prediction loss similar to~\citet{devlin2018bert}.
GPT2-MMI follows MMI-bidi~\citep{li2016diversity}; we generate 50 responses from GPT2 and then rank them based on a trained backward model $p_{\textit{GPT2}}(x|y)$. For the InfoNCE baseline, we only maximize the unconditional MI between $x$ and $y$ and sample negative futures from the marginal distribution $p(y)$. % should not be gpt2 here
DEMI maximizes conditional MI by recurring to $I_{\textit{VAR}}$ and using GPT2 itself as the variational approximation. GPT2 is a generative model therefore we can simply sample a set of negative futures from $p_{\textit{GPT2}}(\xt | \xup)$, that is, by restricting the amount of contextual information GPT2 is allowed to consider. To speed up training, the negative sampling of future candidates is done offline. We also tried $I_{\textit{BO}}$ in this setting and obtained similar results.}

\newcommand\winsymb{\ding{51}}
\newcommand\losesymb{\ding{55}}
\newcommand\samesymb{$=$}
\newcommand{\ubold}{\fontseries{b}\selectfont}
\begin{table}[tbp]
\small
\centering
\caption{Perplexity, BLEU and side-by-side human evaluation on WoW~\citep{dinan2018wizard}. H- columns indicate whether DEMI was preferred (\winsymb) or not (\losesymb), or neither (\samesymb) at $\alpha = 0.01$.}
\label{table:results-wow}
\newcommand\myspaceamount{5mm}
\small{

\begin{tabu}to\linewidth{@{}X[1.9,l]X[0.9,c]X[0.8,c]X[0.9,c]X[1.1,c]X[0.85,c]}
\toprule
{\bf Model} & {\bf ppl} & {\bf BLEU} & {\bf H-rel} & {\bf H-hum} & {\bf H-int} \\
\midrule
{GPT2}  & 19.21 & 0.78 & \winsymb & \winsymb & \winsymb \\
{TransferTransfo} & 19.32 & 0.75 & \winsymb & \winsymb & \winsymb  \\
{GPT2-MMI} & 19.30 & 0.65 & \winsymb & \winsymb &  \winsymb \\
\cmidrule{1-1}
InfoNCE & 18.85  & 0.80 & \samesymb & \winsymb & \winsymb \\
DEMI &  \ubold 18.70  & \bfseries 0.82 & \samesymb & \samesymb & \samesymb \\
\cmidrule{1-1}
{Human} &  {--} & {--} & \losesymb & \losesymb & \losesymb \\
\bottomrule
\end{tabu}
}
\end{table}
\vspace{-3mm}
\paragraph{Results} Table~\ref{table:results-wow} shows results on the validation set obtained by 3 pretraining seeds.
For the test set results and sample dialogue exchanges, please refer to the Appendix. The automated metrics indicate that DEMI representations result in higher quality responses. 
We also perform human evaluation on randomly sampled 1000 WoW dialogue contexts. We present the annotators with a pair of candidate responses consisting of InfoNCE, DEMI and baseline responses.
They were asked to compare the pairs regarding interestingness, relevance and humanness, using a 3-point Likert scale~\citep{zhang2019dialogpt}. In Table~\ref{table:results-wow}, we see that overall responses generated by DEMI were strongly preferred to other models but not to the gold response. Bootstrap confidence intervals and p-values (t-test, following \citealp{zhang2019dialogpt}) indicate significant improvements at $\alpha{=}0.01$.

\section{Related Works}
Representation learning based on MI maximization has been applied in various
domains such as images~\citep{grill2020bootstrap,caron2020unsupervised}, words~\citep{mikolov2013efficient,stratos2018mutual}, graphs~\citep{velivckovic2018deep}, RL~\citep{mazoure2020deep} and videos~\citep{jabri2020space}, exploiting noise-contrastive estimation (NCE)~\citep{gutmann2012noise}, InfoNCE~\citep{oord2018representation} and variational objectives (MINE)~\citep{hjelm2018learning}. InfoNCE have gained recent interest w.r.t. variational approaches due to its lower variance~\citep{song2019understanding} and superior performance in downstream tasks. InfoNCE however can underestimate large amounts of true MI given that it is capped at $\log K$.~\citet{poole2019variational} propose to trade-off between variance and bias by interpolating variational and contrastive bounds.~\citet{song2020multi} propose a modification to InfoNCE for reducing bias where the critic needs to jointly identify multiple positive samples at the same time. Our proposal to scaffold the total MI estimation into a sequence of smaller estimation problems shares similarities with the recent telescopic estimation of density ratio~\citep{rhodes2020telescoping} which is based on variational approximations. Instead, we build upon InfoNCE, propose new results on contrastive conditional MI estimation and apply it to self-supervised representation learning. Other MINE-based approaches of conditional MI estimation can be found in the recent~\cite{mondal2020c}. Our contrastive bound in Eq.~\ref{eq:sep_info_nce} is reminiscent of conditional noise-contrastive estimation~\citep{ceylan2018conditional}, which generalizes NCE for data-conditional noise distributions~\citep{gutmann2012noise}: our result is an interpretation in terms of conditional MI.

\section{Conclusion}
We decompose the original cross-view MI into a sum of conditional and unconditional MI terms (DEMI). We provide several contrastive approximations to the conditional MI and verify their effectiveness in various domains. Incorporating more than two terms in the decomposition is straightforward and could be investigated in the future. Recent work questioned whether MI maximization itself is at the core of the recent success in representation learning ~\citep{rainforth2018tighter,tschannen2019mutual}. These showed that capturing a larger amount of mutual information between views may not correlate to better downstream performance. Other desirable properties of the representation space may play an important role~\citep{wang2020understanding}.
Although we acknowledge these results, we posit that devising more effective ways to maximize MI will still prove useful in representation learning, especially if paired with architectural inductive biases or explicit regularization methods.

\section*{Acknowledgements}
We would like to acknowledge Jiaming Song and Mike Wu for the insightful discussions and the anonymous reviewers for their helpful comments.

\bibliography{icml2021-normalized}
\bibliographystyle{icml2021}

\appendix
\newpage

\onecolumn

\section{Derivations}
\subsection{Derivation of InfoNCE, $I_{\textit{NCE}}$}
\label{app:proof_nce}
We start from Barber and Agakov's variational lower bound on MI~\citep{barber2003algorithm}. $\mi(\x; \xt)$ can be bounded as follows:
\begin{align}
\mi(\x; \xt) &= \expect_{p(\x, \xt)} \log \frac{p(\xt | \x)}{p(\xt)} \ge \expect_{p(\x, \xt)} \log \frac{q(\xt | \x)}{p(\xt)},
\label{eq:barber-agakov-bound}
\end{align}
where $q$ is an arbitrary distribution. We show that the \INCE{} bound~\citep{oord2018representation} corresponds to a particular choice for the variational distribution $q$ followed by the application of the Jensen inequality. Specifically, $q(\xt | \x)$ is defined by independently sampling a set of examples $\{\xt_1, \ldots, \xt_K\}$ from a proposal distribution $\pi(\xt)$ and then choosing $\xt$ from $\{\xt_1, \ldots, \xt_K\}$ in proportion to the importance weights $w_y = \frac{e^{\cri(\x, \xt)}}{\sum_k e^{\cri(\x, \xt_k)}}$, where $\cri$ is a function that takes $\x$ and $\xt$ and outputs a scalar. In the context of representation learning, $\cri$ is usually a dot product between some representations of $\x$ and $\xt$, e.g.~$f(\x)^T f(\xt)$~\citep{oord2018representation}. The unnormalized density of $\xt$ given a specific set of samples $\xt_{2:K} = \{\xt_2, \ldots, \xt_K\}$ and $\x$ is:
\begin{align}
q(\xt | \x, \xt_{2:K}) = \pi(\xt) \cdot \frac{K\cdot e^{\cri(\x, \xt)}}{e^{\cri(\x, \xt)} + \sum_{k=2}^K e^{\cri(\x, \xt_k)}},
\label{eq:var_dist_unnorm}
\end{align}
where we introduce a factor $K$ which provides ``normalization in expectation''.
By normalization in expectation, we mean that taking the expectation of $q(\xt | \x, \xt_{2:K})$ with respect to resampling of the alternatives $\xt_{2:K}$ from $\pi(\xt)$ produces a normalized density (see Sec.~\ref{app:proof_normalized} for a derivation):
\begin{align}
\bar{q}(\xt | \x) = \expect_{\pi(\xt_{2:K})}[q(\xt | \x, \xt_{2:K})],
\label{eq:var_dist}
\end{align}
where $\pi(\xt_{2:K}) = \prod_{k=2}^K \pi(\xt_k)$. The \INCE{} bound~\citep{oord2018representation} is then obtained by setting the proposal distribution as the marginal distribution, $\pi(\xt) \equiv p(\xt)$ and applying Jensen's inequality, giving:
{\small
\begin{align}
\mi(\x, \xt) & \ge \expect_{p(\x, \xt)} \log \frac{\expect_{p(\xt_{2:K})} q(\xt | \x, \xt_{2:K})}{p(\xt)}   \notag
  \ge \expect_{p(\x, \xt)} \left[ \expect_{p(\xt_{2:K})} \log \frac{p(\xt)\, K \cdot w_{\f}}{p(\xt)} \right] \notag \\
  &=   \expect_{p(\x, \xt)} \left[ \expect_{p(\xt_{2:K})} \log \frac{K \cdot e^{\cri(\x, \xt)}}{e^{\cri(\x, \xt)} + \sum_{k=2}^K e^{\cri(\x, \xt_k)}} \right] \notag \\
  &=   \expect_{p(\x, \xt_1) p(\xt_{2:K})} \bigg[\log \frac{e^{\cri(\x, \xt)}}{\frac{1}{K} \sum_{k=1}^K e^{\cri(\x, \xt_k)}}\bigg] = I_{\textit{NCE}}(\x; \xt | \cri, K) \le \log K,
\label{eq:infonce-app}
\end{align}
}where the second inequality was obtained using Jensen's inequality.

\subsubsection{Derivation of normalized distribution}
\label{app:proof_normalized}
We follow~\citet{cremer2017reinterpreting} to show that $q(\xt | \x) = \mathbb{E}_{\xt_{2:K}\sim \pi(\xt)}[q(\xt | \x, \xt_{2:K})]$ is a normalized distribution:
\begin{alignat}{2}
\small
\int_\x\! q(\xt | \x) \,d\xt &= \int_\xt \mathbb{E}_{\xt_{2:K}\sim \pi(\xt)}\left(\pi(\xt) \frac{e^{\cri(\x, \xt)}}{\frac{1}{K} \left(\sum_{k=2}^K e^{\cri(\x, \xt_k)} + e^{\cri(\x, \xt)}\right)}\right) d\xt \notag \\
&= \int_\xt \pi(\xt) \mathbb{E}_{\xt_{2:K}\sim \pi(\xt)}\left( \frac{e^{\cri(\x, \xt)}}{\frac{1}{K} \left(\sum_{k=2}^K e^{\cri(\x, \xt_k)} + e^{\cri(\x, \xt)}\right)}\right) d\xt \notag \\
&= \mathbb{E}_{\pi(\xt)} \mathbb{E}_{\pi(\xt_{2:K})}\left( \frac{e^{\cri(\x, \xt)}}{\frac{1}{K} \left(\sum_{k=2}^K e^{\cri(\x, \xt_k)} + e^{\cri(\x, \xt)}\right)}\right) \notag \\
&= \mathbb{E}_{\pi(\xt_{1:K})}\left(\frac{e^{\cri(\x, \xt)}}{\frac{1}{K} \sum_{k=1}^K e^{\cri(\x, \xt_k)}}\right) \notag \\
&= K \cdot \mathbb{E}_{\pi(\xt_{1:K})}\left(\frac{e^{\cri(\x, \xt_1)}}{\sum_{k=1}^K e^{\cri(\x, \xt_k)}}\right) \notag \\
&= \sum_{i=1}^K \mathbb{E}_{\pi(\xt_{1:K})}\frac{e^{\cri(\x, \xt_i)}}{\sum_{k=1}^K e^{\cri(\x, \xt_k)}} \notag \\
&= \mathbb{E}_{\pi(\xt_{1:K})}\frac{\sum_{i=1}^K e^{\cri(\x, \xt_i)}}{\sum_{k=1}^K e^{\cri(\x, \xt_k)}} = 1
\end{alignat}

\subsection{Proof of Proposition 1}
\label{app:proof_condmi}
\cnce*
\begin{proof}
We begin with 1., the derivation is as follows:
\begin{align}
\mi(\xp; \xt | \xpp) &= \mathbb{E}_{p(\xpp, \xp, \xt)} \log \frac{p(\xt | \xpp, \xp)}{p(\xt | \xpp)} \ge
\mathbb{E}_{p(\xpp, \xp, \xt)} \log \frac{\bar q(\xt | \xpp, \xp)}{p(\xt | \xpp)} \label{eq:proof_condmi:jensen} \\
& = \mathbb{E}_{p(\xpp, \xp, \xt)} \log \frac{\mathbb{E}_{p(\xt_{2:K} | \xpp)} q(\xt | \xpp, \xp, \xt_{2:K})}{p(\xt | \xpp)} \\ 
& \ge \mathbb{E}_{p(\xpp, \xp, \xt)} \mathbb{E}_{p(\xt_{2:K} | \xpp)} \log \frac{p(\xt | \xpp)\, K \cdot w_{\xt}}{p(\xt | \xpp)} \label{eq:proof_condmi:proposal} \\
& = \mathbb{E}_{p(\xpp, \xp, \xt)} \mathbb{E}_{p(\xt_{2:K} | \xpp)} \log \frac{K \cdot e^{\crip(\xpp, \xp, \xt)}}{\sum_{k=1}^K e^{\crip(\xpp, \xp, \xt_k)}} \\
&= \mathbb{E}_{p( \xpp, \xp, \xt)} \mathbb{E}_{p(\xt_{2:K} | \xpp)} \log \frac{e^{\crip(\xpp, \xp, \xt)}}{\frac{1}{K} \sum_{k=1}^K e^{\crip(\xpp, \xp, \xt_k)}} \\
& = I_{\textit{CNCE}}(\xp; \xt | \xpp, \crip, K),
\end{align}
where in Eq.~\ref{eq:proof_condmi:proposal} we used Jensen's inequality and $p(\xt | \xpp)$ as our proposal distribution for the variational approximation $\bar q(\xt|\xpp, \xp)$.

For 2., we rewrite $I_{\textit{CNCE}}$ by grouping the expectation w.r.t $\xpp$:
\begin{align}
\mathbb{E}_{p(\xpp)}\bigg[\mathbb{E}_{p(\xp, \xt_1 | \xpp)p(\xt_{2:K} | \xpp)} \bigg[ \log \frac{e^{\cri(\xpp, \xp, \xt_1)}}{\frac{1}{K}\sum_{k=1}^K e^{\cri(\xpp, \xp, \xt_k)}}\bigg]\bigg].
\end{align}
Given that both distributions in the inner-most expectation condition on the same $\xpp$, this term has the same form as $I_{\textit{NCE}}$ and therefore the optimal solution is $\crip^*_{\xpp} = \log \frac{p(y | \xp, \xpp)}{p(y|\xpp)} + c_{\xpp}(\xp)$~\citep{ma2018noise}. The optimal $\crip$ for $I_{\textit{CNCE}}$ is thus obtained by choosing $\crip(\xpp, \xp, \xt) = \crip^*_{\xpp}$ for each $\xpp$, giving $\crip^* = \log \frac{p(y | \xp, \xpp)}{p(y|\xpp)} + c(\xp, \xpp)$.

For proving 3., we substitute the optimal critic and take the limit $K \rightarrow \infty$. We have:
\begin{align}
\lim_{K\to\infty} \mathbb{E}_{p(\xpp, \xp, \xt_1)p(\xt_{2:K} | \xpp)} \bigg[
\log\frac{
\frac{p(\xt|\xpp, \xp)}{p(\xt|\xpp)}
}
{
\frac{1}{K}\left(\frac{p(\xt_1|\xpp, \xp)}{p(\xt_1|\xpp)} + \sum_{k=2}^K \frac{p(\xt_k|\xpp, \xp)}{p(\xt_k|\xpp)}\right)}\bigg],
\end{align}
From the Strong Law of Large Numbers, we know that as $\frac{1}{K-1} \sum_{k=1}^{K-1} \frac{p(\xt_k | \xpp, \xp)}{p(\xt_k | \xpp)} \to \mathbb{E}_{p(\xt | \xpp)} \frac{p(\xt | \xpp, \xp)}{p(\xt | \xpp)} = 1$, as $K \to \infty$ a.s., therefore (relabeling $y = y_1$):
\begin{align}
I_{\textit{CNCE}} &\sim_{K\to\infty}  \mathbb{E}_{p(\xpp, \xp, \xt)} \bigg[
\log\frac{\frac{p(\xt |\xpp, \xp)}{p(\xt|\xpp)}}
{\frac{1}{K}\left(\frac{p(\xt|\xpp, \xp)}{p(\xt|\xpp)} + K - 1\right)}
\bigg] \\
& \sim_{K\to\infty} \mathbb{E}_{p(\xpp, \xp, \xt)} \bigg[
\log\frac{p(\xt|\xpp, \xp)}{p(\xt|\xpp)} + \log \frac{K}{\left(\frac{p(\xt|\xpp, \xp)}{p(\xt|\xpp)} + K - 1\right)}\bigg] \\
& \sim_{K\to\infty} I(\xp, \xt |\xpp),
\end{align}
where the last equality is obtained by noting that the second term $\to 0$.
\end{proof}

\subsection{Proof for Proposition 2}
\label{app:proof_variational}
\ivar*

\begin{proof}
For 1., we proceed as follows:
\begin{align}
\mi(\xp; \xt | \xpp) & \ge \mathbb{E}_{p(\p, \xt)} \left[\log \frac{q(\xt | \xpp, \xp) q_\xi(\xt | \xpp)}{p(\xt | \xpp) q_\xi(\xt | \xpp)}\right] \notag \\ 
&= \mathbb{E}_{p(\p, \xt)} \left[\log \frac{q(\xt | \xpp, \xp)}{q_\xi(\xt | \xpp)}\right] - \mathbb{E}_{p(x)}\left[KL(p(\xt | \xpp) \| q_\xi(\xt | \xpp))\right] \notag 
\\
& \ge \mathbb{E}_{p(\p, \xt_1)q_\xi(\xt_{2:K} | \xpp)} \left[\log \frac{e^{\crip(\xpp, \xp, \xt_1)}}{\frac{1}{K} \sum_{k=1}^K e^{\crip(\xpp, \xp, \xt_k)}}\right] - \mathbb{E}_{p(x)} \left[ KL(p(\xt | \xpp) \,\|\, q_\xi(\xt | \xpp))\right], \notag \\
& = I_{\textit{VAR}}(\xp, \xt | \xpp, \crip, \xi, K)
\end{align}
where the last step has been obtained as in Eq.~\ref{eq:proof_condmi:proposal}.

Proving 2. is straightforward by noting that if $q_\xi = p$, $KL(p(y|\xpp) || q_\xi(y|\xpp)) = 0$ and the first term corresponds to $I_{\textit{CNCE}}$.

Proving 3. goes as follows:
\begin{align}
\sup_\crip &\; \mathbb{E}_{p(\xp, \xpp, \xt_1)q_\xi(\xt_{2:K} | \xpp)} \bigg[\log \frac{e^{\crip(\xpp, \xp, \xt_1)}}{\frac{1}{K}\sum_{k=1}^K e^{\crip(\xpp, \xp, \xt_k)}} \bigg] - \mathbb{E}_{p(\xpp)} \bigg[ KL \left( p(\xt | \xpp) \,\|\, q_\xi(\xt | \xpp)\right)\bigg] \\
& = E_{p(\xpp, \xp, \xt_1)q_\xi(\xt_{2:K} | \xpp)} \bigg[ \log \frac{p(y_1 | \xpp, \xp)}{q_\xi(y_1|\xpp)} - \log \frac{p(\xt_1 | \xpp)}{q_\xi(\xt_1 | \xpp)} - \log \frac{1}{K} \sum_{k=1}^K \frac{p(\xt_k | \xp, \xpp)}{q_\xi(\xt_k | \xpp)} \bigg] \\
& = I(\xp, \xt | \xpp) - E_{p(\xpp, \xp, \xt_1)q_\xi(\xt_{2:K} | \xpp)} \bigg[ \log \frac{1}{K} \sum_{k=1}^K \frac{p(\xt_k | \xp, \xpp)}{q_\xi(\xt_k | \xpp)} \bigg] \\
& \to_{K \to \infty} I(\xp, \xt | \xpp).
\end{align}
This is obtained by noting that (1) for any $K$ and $q_\xi$, $\arg \sup_\crip I_{\textit{VAR}} = \log \frac{p(y | \xpp, \xp)}{q_\xi(y | \xp)} + c(\xp, \xpp)$ (because the KL doesn't depend on $\crip$) and (2) the second term in the last line goes to 0 for $K \to \infty$ (a straightforward application of the Strong Law of Large Numbers shows that for samples $\xt_{2:K}$ drawn from $q_\xi(\xt_{2:K} | \xpp)$, we have: $\frac{1}{K} \sum_{k=2}^K \frac{p(\xt_k | \xp, \xpp)}{q_\xi(\xt_k | \xpp)} \to_{K \to \infty} 1$).

\end{proof}

\subsection{Proofs for $I_{\textit{IS}}$}

We will be using the following lemma.
\begin{lemma}\label{lem:slln}
For any $\xpp$, $\xp$ and $y$, and any sequence $\crip_K$ such that $||\crip_K - \crip||_\infty \to_{K \to \infty} 0$:
\begin{align}
   &\lim_{K \to \infty} \mathbb{E}_{p(\xt_{2:K})} \log \frac{K e^{\crip_K(\xpp, \xp, \xt)}}{e^{\crip_K(\xpp, \xp, \xt)} + (K-1)\;{\sum_{k=2}^K w_k e^{\crip_K(\xpp, \xp, \xt_k)}}} \\ 
   &\hspace{6cm} = \lim_{K \to \infty} \mathbb{E}_{p(\xt_{2:K}|\xpp)} \log \frac{K e^{\crip(\xpp, \xp, \xt)}}{e^{\crip(\xpp, \xp, \xt)} + \;{\sum_{k=2}^K e^{\crip(\xpp, \xp, \xt_k)}}},
\end{align}
where $w_k = \frac{\exp \cri^*(\xpp, \xt_k)}{\sum_{k=2}^K \exp \cri^*(\xpp, \xt_k)}$, for $\cri^*(\xpp, \xt_k) = \arg \sup_\cri I_{\textit{NCE}}(\xpp, \xt | \cri, K) = \log \frac{p(y_k|\xpp)}{p(y_k)}$.
\end{lemma}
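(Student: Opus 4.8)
The plan is to show that \emph{both} limits exist and equal the same explicit constant, namely $\crip(\xpp,\xp,\xt)-\log\mu$ where $\mu\eqdef\mathbb{E}_{p(\xt\mid\xpp)}[e^{\crip(\xpp,\xp,\xt)}]$. Throughout, $\xpp,\xp,\xt$ are fixed and the only randomness is in the negatives $\xt_{2:K}$. Since $\|\crip_K-\crip\|_\infty\to0$, the sequence $(\crip_K)$ is uniformly bounded, so fix $M<\infty$ with $\sup_K\|\crip_K\|_\infty\vee\|\crip\|_\infty\le M$. I would then identify the a.s.\ limit of each integrand and pass the limit through the expectation by bounded convergence.

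First I would handle the left-hand side. Using that $e^{\cri^*(\xpp,\xt_k)}=p(\xt_k\mid\xpp)/p(\xt_k)$, the weights are $w_k=\frac{p(\xt_k\mid\xpp)/p(\xt_k)}{\sum_{m=2}^K p(\xt_m\mid\xpp)/p(\xt_m)}$, so, writing $S_K\eqdef\frac{1}{K-1}\sum_{k=2}^K\frac{p(\xt_k\mid\xpp)}{p(\xt_k)}e^{\crip_K(\xpp,\xp,\xt_k)}$ and $T_K\eqdef\frac{1}{K-1}\sum_{k=2}^K\frac{p(\xt_k\mid\xpp)}{p(\xt_k)}$, we have $(K-1)\sum_{k=2}^K w_k e^{\crip_K(\xpp,\xp,\xt_k)}=(K-1)\,S_K/T_K$. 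With $\xt_k\sim p(\xt)$ i.i.d., the Strong Law of Large Numbers gives $T_K\to\mathbb{E}_{p(\xt)}[p(\xt\mid\xpp)/p(\xt)]=1$ a.s., and I claim $S_K\to\mu$ a.s.\ (using the change-of-measure identity $\mu=\mathbb{E}_{p(\xt)}[\frac{p(\xt\mid\xpp)}{p(\xt)}e^{\crip(\xpp,\xp,\xt)}]$). The one subtlety — and I expect this to be the main obstacle — is that the summands of $S_K$ depend on $K$ through $\crip_K$, so the SLLN does not apply verbatim; I would decouple via $e^{\crip_K}=e^{\crip}+(e^{\crip_K}-e^{\crip})$, noting the first piece averages to $\mu$ by the ordinary SLLN while the correction is bounded by $\sup_{\xt}|e^{\crip_K(\xpp,\xp,\xt)}-e^{\crip(\xpp,\xp,\xt)}|\le e^{M}\|\crip_K-\crip\|_\infty\to0$ times the a.s.-bounded factor $T_K$, hence vanishes. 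Then $\frac{1}{K}\big(e^{\crip_K(\xpp,\xp,\xt)}+(K-1)\tfrac{S_K}{T_K}\big)\to\mu$ a.s., and writing the left integrand as $\crip_K(\xpp,\xp,\xt)-\log\!\big(\tfrac1K e^{\crip_K(\xpp,\xp,\xt)}+\tfrac{K-1}{K}\tfrac{S_K}{T_K}\big)$ shows it converges a.s.\ to $\crip(\xpp,\xp,\xt)-\log\mu$.

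The right-hand side is the direct special case: with $\xt_k\sim p(\xt\mid\xpp)$ i.i.d., the SLLN gives $\frac1{K-1}\sum_{k=2}^K e^{\crip(\xpp,\xp,\xt_k)}\to\mu$ a.s., so the right integrand also converges a.s.\ to $\crip(\xpp,\xp,\xt)-\log\mu$. To finish, I would pass the limits inside the two expectations by the bounded convergence theorem: since $\sum_k w_k=1$ and all critic values lie in $[-M,M]$, the numerator of each logarithm is in $[Ke^{-M},Ke^{M}]$ and so is the denominator, whence both integrands are bounded in absolute value by $2M$ uniformly in $K$ and in the negatives. Therefore both $\mathbb{E}_{p(\xt_{2:K})}[\,\cdot\,]$ and $\mathbb{E}_{p(\xt_{2:K}\mid\xpp)}[\,\cdot\,]$ converge to the common value $\crip(\xpp,\xp,\xt)-\log\mu$, which establishes the equality (and existence of both limits). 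Apart from the triangular-array point noted above, the remaining steps — the SLLN applications and the bounded-convergence justification — are routine.
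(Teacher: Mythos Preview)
Your proposal is correct and follows essentially the same route as the paper: rewrite the weighted sum as a ratio $S_K/T_K$ of importance-sampling averages, apply the SLLN to $T_K$ and to the $\crip$-part of $S_K$, and handle the triangular-array dependence on $K$ via the decomposition $e^{\crip_K}=e^{\crip}+(e^{\crip_K}-e^{\crip})$, then repeat the (simpler) argument for the right-hand side. You go slightly further than the paper by explicitly justifying the interchange of limit and expectation via bounded convergence (the paper leaves this implicit); note that your uniform-boundedness claim $\sup_K\|\crip_K\|_\infty<\infty$ tacitly uses $\|\crip\|_\infty<\infty$, which the paper also assumes implicitly (their later application treats the critics as vectors in a finite-dimensional space).
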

\begin{proof}
We see that almost surely, for $\xt_{2:K} \sim p(\cdot)$:
\begin{equation}
\sum_{k=2}^K w_k e^{\crip_K(\xpp, \xp, \xt_k)} = \frac{\frac{1}{K - 1} \sum_{k=2}^{K} \frac{p(y_k|\xpp)}{p(y_k)} e^{\crip_K(\xpp, \xp, \xt_k)}}{\frac{1}{K - 1}\sum_{k=2}^K \frac{p(y_k|\xpp)}{p(y_k)}} \to_{K \to \infty} \mathbb{E}_{p(y | \xpp)} e^{\crip(\xpp, \xp, y)},
\end{equation}
where we applied the Strong Law of Large Numbers to the denominator. 

For the numerator, we write:
\begin{align*}
    \frac{1}{K - 1} \sum_{k=2}^{K} \frac{p(y_k|\xpp)}{p(y_k)} e^{\crip_K(\xpp, \xp, \xt_k)} = \frac{1}{K - 1}& \sum_{k=2}^{K} \frac{p(y_k|\xpp)}{p(y_k)} e^{\crip(\xpp, \xp, \xt_k)} \\ 
    &+ \frac{1}{K - 1} \sum_{k=2}^{K} \frac{p(y_k|\xpp)}{p(y_k)} (e^{\crip_K(\xpp, \xp, \xt_k)} - e^{\crip(\xpp, \xp, \xt_k)})
\end{align*}
and note that the first term is the standard IS estimator using $p(y_k)$ as proposal distribution and tends to $\mathbb{E}_{p(y | \xpp)} e^{\crip(\xpp, \xp, \xt)}$ from the Strong Law of Large Numbers, while the second term goes to $0$ as $\crip_K$ tends to $\crip$ uniformly.

This gives $\lim_{K \to \infty} \mathbb{E}_{p(\xt_{2:K})} \log \frac{K e^{\crip_K(\xpp, \xp, \xt)}}{e^{\crip_K(\xpp, \xp, \xt)} + (K-1)\;{\sum_{k=2}^K w_k e^{\crip_K(\xpp, \xp, \xt_k)}}} = \log \frac{e^{\crip(\xpp, \xp, \xt)}}{\;\mathbb{E}_{p(y | \xpp)} e^{\crip(\xpp, \xp, \xt)}}$.

Following the same logic (without the importance-sampling) demonstrates that: 
\begin{align*}
\lim_{K \to \infty} \mathbb{E}_{p(\xt_{2:K}|\xpp)} \log \frac{K e^{\crip(\xpp, \xp, \xt)}}{e^{\crip(\xpp, \xp, \xt)} + \;{\sum_{k=2}^K e^{\crip(\xpp, \xp, \xt_k)}}} = \log \frac{e^{\crip(\xpp, \xp, \xt)}}{\;\mathbb{E}_{p(y | \xpp)} e^{\crip(\xpp, \xp, \xt)}},
\end{align*}
which concludes the proof.
\end{proof}

\label{app:proof_is}
\is*
\begin{proof}
By applying Lemma~\ref{lem:slln} with $\crip_K = \crip$, we know that for any $\crip$: 
\begin{align*}
    \lim_{K \to \infty} I_{\textit{IS}}(\xp; \xt | \xpp, \crip, K) &= \lim_{K \to \infty} \mathbb{E}_{p(\xpp, \xp, y) p(\xt_{2:K}|\xpp)} \log \frac{K e^{\crip(\xpp, \xp, \xt)}}{e^{\crip(\xpp, \xp, \xt)} + \;{\sum_{k=2}^K e^{\crip(\xpp, \xp, \xt_k)}}}.
\end{align*}
In particular, the RHS of the equality corresponds to $\lim_{K \to \infty} I_{\textit{CNCE}}(\xp, y | \xpp, \crip, K)$. That quantity is smaller than $I(\xp, \xt | \xpp)$, with equality for $\crip = \crip^*$. This guarantees that:
\begin{align}
\lim_{K \to \infty} \sup_{\crip} I_{\textit{IS}}(\xp; \xt | \xpp, \crip, K) \geq \lim_{K \to \infty} I_{\textit{IS}}(\xp; \xt | \xpp, \crip^*, K) = I(\xp, \xt | \xpp).
\end{align}
We now prove the reverse inequality. We let $2 \epsilon = \lim_{K \to \infty} \sup_{\crip} I_{\textit{IS}}(\xp; \xt | \xpp, \crip, K) - I(\xp, \xt | \xpp)$, and assume towards a contradiction that $\epsilon > 0$. We know that: 
$$\exists K_0, \quad \forall K \geq K_0, \quad \sup_{\crip} I_{\textit{IS}}(\xp; \xt | \xpp, \crip, K) \geq I(\xp, \xt | \xpp) + \epsilon.$$
Now, $\forall K \geq K_0$, let $\crip_K$ be such that:
$$I_{\textit{IS}}(\xp; \xt | \xpp, \crip_K, K) \geq \sup_{\crip} I_{\textit{IS}}(\xp; \xt | \xpp, \crip, K) - \frac{\epsilon}{2},$$

and thus: $\forall K \geq K_0, I_{\textit{IS}}(\xp; \xt | \xpp, \crip_K, K) \geq I(\xp, \xt | \xpp) + \frac{\epsilon}{2}$.

Since $\crip_K \in \mathbb{R}^{|\mathcal{X}| \times |\mathcal{X}| \times |\mathcal{Y}|}$, $\{\crip_K\}_{K \geq K_0}$ contains a subsequence that converges to a certain $\crip_{\infty} \in \bar{\mathbb{R}}^{|\mathcal{X}| \times |\mathcal{X}| \times |\mathcal{Y}|}$. Without loss of generality, we assume that $\forall K, \forall\xpp, \forall\xp, \mathbb{E}_{p(y)} [\crip_K(\xpp, \xp, y)] = 0$ which implies that $\mathbb{E}_{p(y)} [\crip_\infty(\xpp, \xp, y)] = 0$ (similarly to $I_{\textit{NCE}}$, $I_{\textit{IS}}$ is invariant to functions of $(\xpp,\xp)$ added to $\crip$).

In particular, this guarantees that $||\crip_\infty||_\infty < \infty$. Otherwise, we would have $\crip_\infty(\xpp, \xp, y) = -\infty$ for a given $y$, which would then imply $I_{\textit{IS}}(\xp; \xt | \xpp, \crip_\infty, K) = -\infty$ and give a contradiction.

We can now apply Lemma~\ref{lem:slln} to  $\{\crip_K\}$ and $\crip_\infty$ to show that $\lim_{K \to \infty} I_{\textit{IS}}(\xp; \xt | \xpp, \crip_K, K) = \lim_{K \to \infty} I_{\textit{CNCE}}(\xp, y | \xpp, \crip_{\infty}, K)$, and get a contradiction: the first term is larger than $I(\xp, \xt | \xpp) + \frac{\epsilon}{2}$ while the second is smaller than $I(\xp, \xt | \xpp)$.
\end{proof}

\subsection{Proof for $I_\textit{BO}$}
\label{proof:bo}
\bo*
\begin{proof}
To prove 1., it suffices to follow the proof for $I_{\textit{NCE}}$ (Sec.~\ref{app:proof_nce}). To prove 2., we set $\eta(\xup, \x, \xt) = \cri^*(\xup, \xt) + \phi(\xup, \x, \xt_1)$.~\citet{ma2018noise} show that $\eta^*(\xup, \x, \xt) = \log \frac{p(\xt | \xup, \x)}{p(\xt)} + c^\eta(\xup, \x)$, for any $K$. Knowing that $\cri^*(\xup, \xt) = \log \frac{p(\xt | \xup)}{p(\xt)} + c^\cri(\xup)$ is a constant in the maximization problem, simple algebra shows that $\phi^*(\xup, \x, \xt) = \log \frac{p(\xt | \xup, \x)}{p(\xt | \xup)} + c(\xup, \x)$.
\end{proof}

\subsection{Synthetic Experiments}
\label{app:synthetic}
Here, we provide details for Sec.~\ref{sec:empirical_validation}. In this experiment, each $\xp$, $\xpp$ and $\xt$ are 20-dimensional. For each dimension, we sampled $(\xp_i , \xpp_i, \xt_i)$ from a correlated Gaussian with mean 0 and covariance matrix $\mathtt{cov}_i$.
For a given value of MI, \texttt{mi} = $\{5, 10, 15, 20\}$, we sample covariance matrices
$\mathtt{cov}_i =\text{\texttt{sample\_cov(mi${}_i$)}}$, such that $\sum_i \mathtt{mi}_i = \mathtt{mi}$, $\mathtt{mi}_i>0$ chosen at random. We optimize the bounds by stochastic gradient descent (Adam, learning rate $5\cdot 10^{-4}$).
All encoders $f$ are multi-layer perceptrons with a single hidden layer and ReLU activation. Both hidden and output layer have size 100.

InfoNCE computes:
{\small
\begin{align}
\mathbb{E}_p\left[ \log \frac{e^{f([\xp, \xpp])^T f(\xt)}}{e^{f([\xp, \xpp])^T f(\xt)} + \sum_{k=2}^K e^{f([\xp, \xpp])^T f(\xt_k)}}\right] + \log K,\;\; \xt_{2:K} \sim p(\xt),\notag
\end{align}
}where the proposal is the marginal distribution $p(\xt)$, $E$ is chosen to be a dot product between representations, $\mathbb{E}_p$ denotes expectation w.r.t. the known joint distribution $p(\xp, \xpp, \xt)$ and is approximated with Monte-Carlo, $[\xp, \xpp]$ denotes concatenation and $f$ is a 1-hidden layer MLP.

DEMI computes:
{\small
\begin{align}
\mathbb{E}_{p(\xpp, \xp, \xt)p(\xt_{2:K / 2})}&\left[ \log \frac{e^{f(\xpp)^T f(\xt)}}{e^{f(\xpp)^T f(\xt)} + \sum_{k=2}^{K / 2} e^{f(\xpp)^T f(\xt_k)}} \right] + \\
& \mathbb{E}_{p(\xpp, \xp, \xt)p(\xt_{2:{K / 2}} | \xpp)}\left[ \log \frac{e^{f([\xpp, \xp])^T f(\xt)}}{ e^{f([\xpp, \xp])^T f(\xt)} + \sum_{k=2}^{K / 2} e^{f([\xpp, \xp])^T f(\xt_k)}} \right] + 2 \log K / 2 \notag
\end{align}
}where $f(\x)$ is just $f([\x, \mathbf{0}])$ in order to re-use MLP parameters for the two terms. The negative samples of the conditional MI term come from the conditional distribution $p(\xt|\xpp)$, which is assumed to be known in this controlled setting.
We maximize both lower bounds with respect to the encoder $f$. We report pseudo-code for $\mathtt{sample\_cov}$ in Listing~\ref{lst:cov}, used to generate $3{\times}3$ covariance matrices for a fixed $I(\{\x,\xup\};y)$ and uniformly sampled $\alpha = I(\x;\xt) / I(\{\x, \xup\};y)$.

\newcommand{\mytt}{\begingroup
\catcode`_=12 \domytt}
\newcommand\domytt[1]{\text{\texttt{#1}}\endgroup}

\definecolor{codegreen}{rgb}{0,0.6,0}
\definecolor{codegray}{rgb}{0.5,0.5,0.5}
\definecolor{codepurple}{rgb}{0.58,0,0.82}
\definecolor{backcolour}{rgb}{0.95,0.95,0.92}
\begin{lstlisting}[
float=tp,
language=Python,
floatplacement=tbp,
xleftmargin=2em,
frame=single,
framexleftmargin=1.5em,
backgroundcolor=\color{backcolour},
belowskip=-2\baselineskip,
commentstyle=\color{codegreen},
keywordstyle=\color{magenta},
numberstyle=\tiny\color{codegray},
stringstyle=\color{codepurple},
backgroundcolor=\color{backcolour},
commentstyle=\color{codegreen},
basicstyle=\ttfamily\scriptsize,
breakatwhitespace=false,         
numbers=left,                    
breaklines=true,                 
captionpos=b,                    
keepspaces=true,                 
numbersep=5pt,                  
showspaces=false,                
showstringspaces=false,
showtabs=false,                  
tabsize=2,
label={lst:cov},
caption=Pseudo-code for covariance sampling in the synthetic experiment.]
def sample_cov(mi):
  alpha = random.uniform(0.1, 0.9)
  params = random.normal(0, $\mathbb I_6$)
  # use black box optimizer (Nealder-Mead) to determine opt_params
  opt_param = $\arg\min_x \mytt{residual}(\mytt{params}, \mytt{mi}, \alpha)$
  return project_posdef(opt_params)
  
def project_posdef(x):
  # project x $\in\mathbb R^6$ to a positive definite 3x3 matrix
  cov = zeros(3, 3)
  cov[tril_indices(3)] = x
  cov /= column_norm(cov)
  return dot(cov, cov.T)
  
def analytical_mi(cov):
  # compute analytical MI of 3 covariate Gaussian variables
  cov_01 = cov[:2, :2]
  cov_2 = cov[2:3, 2:3]
  mi_xp_xpp_y = 0.5 * (log(det(cov_01)) + log(det(cov_2)) - log(det(cov)))
  cov_1 = cov[1:2, 1:2]
  cov_23 = cov[1:, 1:]
  mi_xp_y = 0.5 * (log(det(cov_1)) + log(det(cov_2)) - log(det(cov_23)))
  return mi_xp_xpp_y, mi_xp_y
  
def residual(x, mi, $\alpha$):
  # penalize difference between analytical mi and target mi, $\alpha\,\mytt{mi}$
  cov = project_posdef(x)
  mi_xp_y, mi_xp_y = analytical_mi(cov)
  return (mi_xp_xpp_y - mi) ** 2 + (mi_xp_y - $\alpha$ * mi) ** 2
\end{lstlisting}

\enlargethispage{2em}
 \begin{table*}[htbp]
 \centering
    \caption{A sample dialogue between speaker $A$ and speaker $B$ from the Wizard of Wikipedia dataset. The four rows from top to bottom are: (1) $x$: the ``past" dialogue up to utterance $k$ (2) $y$: the ground-truth utterance for the next turn $k+1$ (3) $\xt_{1:N}$: future candidates  sampled from the ``restricted context" future distribution  $p(\xt | \xup)$. These candidates correspond to the set of \textbf{hard} negatives that are closely related to the conversation. (4) $y'_{1:N}$: future candidates sampled randomly from the dataset.  We can see that candidates $\xt_{1:N}$ are semantically close but incoherent w.r.t to the dialogue history as they were conditioned solely on the immediate past utterance $\xup$. However, we can notice that candidates $y'_{1:N}$ are semantically distant from $\p$ as they were sampled randomly from the data distribution.
    The highlighted text in green correspond to the topic of the conversation. Speaker $B$ mentions that they have never done either parachuting or skydiving. $B_1$ corresponds to the utterance generated based on the restricted context $\xup$. The utterance is on-topic but completely contradictory to what speaker $B$ has said in the past. On the other hand $B'_1$ is randomly sampled  from other dialogues. We can observe that  the utterance is clearly irrelevant to the conversation. 
    }
    \label{table:qualitative-1}
    \begin{tabu}to\linewidth{@{}X[1.2,l]X[0.2,l]X[6,l]@{}}
    \toprule
    $\p$ & $A$: & I like \colorbox{emerald}{parachuting or skydiving}.\\
      & \colorbox{lightcoral}{$\bm{B}$}: & \colorbox{gold(web)(golden)}{I've never done either} but they sound terrifying, not a fan of heights.\\
      & $A$: &But it is interesting game. This first parachute jump in history was made by Andre Jacques.\\
      & $\bm{B}$: & Oh really ? Sounds like a french name, what year did he do it ?\\
      & $A$: & It done in October 22 1797. They tested his contraption by leaping from a hydrogen balloon.\\
      & $\bm{B}$: & Was he successful or did he kick the bucket off that stunt? \\
      & $A$: & I think its a success. The military developed parachuting tech.\\
      \midrule
      $\xt  \sim p(\xt | \xup)$ & $B_{gt}$ & Yeah nowadays they are a lot more stable and well made. \\
    \midrule
    $\xt_{1:N} \sim p(\xt | \xup)$
    & \colorbox{lightcoral}{$B_1$}: &  That is great. \colorbox{gold(web)(golden)}{I've been skydiving for days now}. How is it ?\\
    & $B_2$: & Oh I have never flown but I'm glad to know.\\
    & $B_3$: & I've been dying for it since I was a kid. \\
    & $B_4$: & Yes, that is why NASA had an advanced mechanics tech for months. \\
      & $B_5$: & I went parachuting last Sunday and enjoyed it. \\
        \midrule
    $y'_{1:N} \sim p(\xt)$
    & \colorbox{lightcoral}{$B'_1$}: & \colorbox{lightcornflowerblue}{I think science fiction is an amazing genre for anything}\\
    & $B'_2$: & Can you imagine the world without internet access ? \\
    & $B'_3$: & I am just finishing my university course and I will be a qualified pharmacist. \\
    & $B'_4$: & I don't know how to be romantic. I have trouble expressing emotional attraction. \\
    & $B'_5$: & I think Krav Maga is a martial art sport. That 's the reason I picked it .
    \\
    \bottomrule
    \end{tabu}
    \end{table*}

\section{Experiments on Dialogue}
\label{app:dialog}
\subsection{DEMI Details}
The optimization of the DEMI requires the specification of a critic. Following previous work~\citep{oord2018representation,hjelm2018learning}, we implement the critic by a dot product between representations of the past $f(x)$ and those of the future $f(y)$. We obtain $f_{x}$, $f_{y}$ by running a forward pass with the GPT2 model on the words from the past and the future separately and by taking the state of the last layer of the GPT2 corresponding to the last token in the past and the future respectively.

For all DEMI terms, given the past, the model is trained to pick the ground-truth future among a set of $N$ future candidates. This candidate set includes the ground-truth future and $N-1$ negative futures drawn from different proposal distributions. 
To compute $I_{\textit{NCE}}(x; \xt)$, we consider the ground truth future of each sample in the batch as a negative candidate for the other samples in the same batch. Using this approach, the number of candidates $N$ is equated to the batch size. This ensures that negative samples are sampled from the marginal distribution $p(y)$.
To compute the conditional MI boud $I_{\textit{CNCE}}(\x; \xt | \xup)$, we sample negative futures $p(\xt | \xup)$ by conditioning the GPT2 model on the most recent utterance in the past $\xup$.  
\subsection{Dataset}
\textbf{Wizard of Wikipedia} \cite{dinan2018wizard}
consists of \num{20365} dialogues where each dialogue in the conversation is about a specific topic. There are two participants in the conversation: the wizard and the apprentice. The apprentice is a curious learner who is eager to know more about a particular topic. However, the wizard is a knowledgeable expert who tries to inform the apprentice about the topic. In our experiments, we used the valid data ``unseen valid" that includes topics that do not overlap with the train data and the test data. 
Detailed statistics of the dataset are presented in Table~\ref{tab:wiki}. 

    \begin{table*}[!htbp]
        \centering
        \caption{Statistics of the Wizard of Wikipedia dataset}
        \label{tab:wiki}
        \vspace*{0.1in}
        \begin{tabu}to.5\linewidth{@{}X[l]S[table-format=6.0]S[table-format=4.0]S[table-format=4.0]@{}}
         \toprule
         & \textbf{\# Train} & \textbf{\# Valid} & \textbf{\# Test}\\
         \midrule
        Number of utterances &  166787 & 8806 & 8782 \\
        Number of dialogues &   18430 & 967 & 968 \\
        Number of topics  &  1247 & 300 & 58 \\
        Average turns per dialog  &   9 & 9 & 9 \\
         \bottomrule
        \end{tabu}
    \end{table*}

\subsection{Experimental Setup}
 Given memory constraints, all the proposed models are trained with a batch size of 5 per GPU, considering up to three utterances for the future and five utterances in the past. All the models are trained on 2 NVIDIA V100s. The models early-stop in the 4th epoch. 
We use the Adam optimizer with a learning rate of \num{6.25e-5}, which we linearly decay to zero during training. Dropout is set to 10\% on all layers. \INCE{}/DEMI  terms are weighted with a factor 0.1 in the loss function. We varied the factor from 0.1 to 1 and 0.1 was chosen based on the best results on the validation set. During inference, we use nucleus sampling \cite{holtzman2019curious} with $p=0.9$ for all models.

\subsection{Additional Automated metrics}
\paragraph{Repetition} The word repetition metrics aim at testing the model's performance in generating responses while avoiding artificial repetitions. We employ the repetition metrics presented in~\citet{welleck2019neural}: \textbf{seq-rep-}$n$, \textbf{rep}, \textbf{wrep} and \textbf{uniq}. These metrics are defined based on the amount of repetitions in the generations. \textbf{seq-rep-}$n$ measures the portion of duplicate n-grams in a generated sequence:
\begin{align}\label{repetition}
\textbf{seq-rep-}n = 1 - \frac{\left|\text{unique n-grams}(w_{1:N})\right|}{\left|\text{n-grams}\right|} 
\end{align}
where $w_{1:N}$ is the generated utterance. We report \textbf{seq-rep-avg} which averages over $n \in \{2,3,4,5,6\}$. \textbf{rep} measures the fraction of tokens that occur in previous tokens, \textbf{uniq} counts the number of unique tokens on the validation set. Please refer to~\citep{welleck2019neural,Li2019DontST} for more information about these metrics. 

\paragraph{Distinct-n} The metric is  derived from ~\citet{li2016diversity}. It is defined as the number of unique $n$-grams, normalized by the total number of $n$-grams of tested sentences.

\paragraph{Entropy-$n$} We employ the entropy metric from ~\citet{zhang2018generating} which aims to fix the problem of frequency difference of n-grams in Distinct-n by reflecting how evenly the empirical n-gram distribution is for each given sentence.

Results on the test set and the valid set are presented in Table \ref{table:results-wow-app} and Table \ref{table:results-wow-app-valid} respectively. 

\begin{table*}[tbp]
%\renewrobustcmd{\bfseries}{\xtontseries{b}\selectfont}
\robustify{\bfseries}
\small
\centering
    \caption{Results for perplexity, sequence-level metric, token-level metrics, BLEU and diversity metrics on the test data of the Wizard of Wikipedia dataset. Results demonstrate that the proposed \INCE{} and DEMI bounds achieve lower perplexity, reduce next-token
    repetition and increase the number of unique next-tokens  compared to the baselines GPT2, GPT2-MMI and TransferTransfo. Note that our results are not directly comparable with \citet{Li2019DontST} as their model is trained from scratch on a not publicly available Reddit-based corpus.} 
    \label{table:results-wow-app}
    \newcommand\myspaceamount{5mm}
    \newcommand\mycolsep{\hspace{\myspaceamount}}
    \setlength{\tabcolsep}{1pt}
    \newcolumntype A{S[table-format=1.3,round-mode=places,round-precision=3,detect-weight,table-column-width=10mm, tight-spacing=false]}
    \newcolumntype B{S[table-format=2.2,round-mode=places,round-precision=2,detect-weight,table-column-width=10mm, tight-spacing=false]}
    \newcolumntype C{S[table-format=4.0,round-mode=places,round-precision=0,detect-weight, tight-spacing=false]}
    \newcolumntype D{S[table-format=-1.2,round-mode=places,round-precision=2,detect-weight,table-column-width=10mm, tight-spacing=false]}
    \small{
    \begin{tabu}to .7\linewidth{@{}X[1,l]BAAACAAAA@{}}
    \toprule
    \multicolumn1{@{}l}{{\bf Model}}
    & \multicolumn1c{{\bf ppl}}
    & \multicolumn1c{{\bf seq-rep-avg}} 
    & \multicolumn1c{{\bf rep}}
    & \multicolumn1c{{\bf wrep}}
    & \multicolumn1c{{\bf uniq}}
    %& \multicolumn1c{{\bf Ent E3}}
    %& \multicolumn1c{{\bf Ent E4}} 
    & \multicolumn1c{{\bf dist-1}}
    & \multicolumn1c{{\bf dist-2}} 
    & \multicolumn1c{{\bf BLEU}} 
    & \multicolumn1c{{\bf Entropy-4}} \\
    %  & \multicolumn1c{{\bf Entropy E4}}

    \midrule
    {GPT2}  & 19.24 & 0.064 &  0.130 &  0.132  &  7393  & 0.064  & 0.392 & 0.775 & 0.095\\
    {TransferTransfo}  &  19.33  & 0.078 &  0.134 &  0.132  &  7735  &   0.058 & 0.386 & 0.752 & 0.084\\
     GPT2-MMI & 19.35  &  0.070 &  0.129 &  0.135 & 7623  &  0.052 & 0.384 & 0.740 &    0.092\\
    InfoNCE & 18.88  &  0.065 &  0.126 &  0.131 & 8432  &  0.065 & 0.390 & 0.799 &  0.107 \\
    DEMI & \bfseries 18.66 & \bfseries 0.050   &  \bfseries 0.120 & \bfseries 0.128  & \bfseries 8666 &  \bfseries 0.070 & \bfseries 0.405& \bfseries 0.810  & \bfseries 0.108\\
    \cmidrule(r){1-1}
      {Ground Truth}    &  {--}& 0.052 & 0.095 & {--}  & 9236 &  0.069 & 0.416 & {--} & 0.11 \\
    \bottomrule
    \end{tabu}
    }
    \end{table*}

\begin{table*}[tbp]
%\renewrobustcmd{\bfseries}{\xtontseries{b}\selectfont}
\robustify{\bfseries}
\small
\centering
    \caption{Results for perplexity, sequence-level metric, token-level metrics, BLEU and diversity metrics on the valid data of the Wizard of Wikipedia dataset.}
    \label{table:results-wow-app-valid}
    \newcommand\myspaceamount{5mm}
    \newcommand\mycolsep{\hspace{\myspaceamount}}
    \setlength{\tabcolsep}{1pt}
    \newcolumntype A{S[table-format=1.3,round-mode=places,round-precision=3,detect-weight,table-column-width=10mm, tight-spacing=false]}
    \newcolumntype B{S[table-format=2.2,round-mode=places,round-precision=2,detect-weight,table-column-width=10mm, tight-spacing=false]}
    \newcolumntype C{S[table-format=4.0,round-mode=places,round-precision=0,detect-weight, tight-spacing=false]}
    \newcolumntype D{S[table-format=-1.2,round-mode=places,round-precision=2,detect-weight,table-column-width=10mm, tight-spacing=false]}
    \small{
    \begin{tabu}to .7\linewidth{@{}X[1,l]BAAACAAAA@{}}
    \toprule
    \multicolumn1{@{}l}{{\bf Model}}
    & \multicolumn1c{{\bf ppl}}
    & \multicolumn1c{{\bf seq-rep-avg}} 
    & \multicolumn1c{{\bf rep}}
    & \multicolumn1c{{\bf wrep}}
    & \multicolumn1c{{\bf uniq}}
    %& \multicolumn1c{{\bf Ent E3}}
    %& \multicolumn1c{{\bf Ent E4}} 
    & \multicolumn1c{{\bf dist-1}}
    & \multicolumn1c{{\bf dist-2}} 
    & \multicolumn1c{{\bf BLEU}} 
    & \multicolumn1c{{\bf Entropy-4}} \\

    \midrule
    {GPT2}  & 19.21 & 0.066 &  0.136 &  0.134  &  7259  & 0.083  &    0.432 & 0.78 & 0.106\\
    {TransferTransfo}  &  19.32  & 0.074 &  0.134 &  0.133  &  7223  &   0.082 & 0.431 & 0.75 & 0.106\\
     GPT2-MMI & 19.30  &  0.065 &  0.126 &  0.130 & 7351  &  0.082 & 0.425 & 0.65 &   0.106\\
    InfoNCE & 18.85  &  0.065 &  0.126 &  0.126 & 7300  & 0.091 & 0.046 & 0.8 &  0.107 \\
    DEMI & \bfseries 18.70 & \bfseries 0.054   &  \bfseries 0.130 & \bfseries 0.120  & \bfseries 7375 & \bfseries 0.077 & \bfseries 0.0467 & \bfseries 0.82  & \bfseries 0.108\\
    \cmidrule(r){1-1}
      {Ground Truth}    &  {--}& 0.052 & 0.095 & {--}  & 9236 &  0.069 & 0.416 & {--} & 0.11 \\
    \bottomrule
    \end{tabu}
    }
    \end{table*}

 \begin{table*}[]
\renewcommand{\arraystretch}{1.3}
\linespread{0.8}\selectfont
\centering
\caption{Selected responses to the same context from different methods fine-tuned on the Wizard of Wikipedia datasets.}
\vspace*{0.1in}
\newcommand\myhangindent{\hangindent=1.4em}
{\footnotesize
\begin{tabular}{@{}>{\hangindent=2em}p{6.7cm}>{\hangindent=6em}p{9.7cm}@{}}
\toprule
    {{\bf Context}} & { {\bf Generated Responses}} \\
    \midrule
    \multirow{4}{=}{%
        \myhangindent
        {\bf A:} I recently took up archery and I think I love it . Do you know anything about Archery ?\endgraf
        \myhangindent
        {\bf B:} Archery is a sport / skill of using a bow to propel arrows. \endgraf
        \myhangindent
        {\bf A:} And a great sport it is . Do you know where archery originated from ?}
      & {\bf DEMI}: Archery originated from ancient Mesopotamia, and is actually practiced internationally in over 200 countries.  \\
      & {\bf InfoNCE}: I am not sure but many traditions and principles arose during the Mesolithic and Paleolithic era. \\
      & {\bf TransferTransfo }: Yep , you just use it for skill and using it to shoot arrows.
      \\
 & {\bf GPT2}: I don't know, but I know that the old French called it archer's art.\\
 \midrule
    \multirow{4}{=}{%
        \myhangindent
{\bf A:} What is your motivation?\endgraf
        \myhangindent
{\bf B:} That's a tough question. I 'd say my own progress. I like seeing how much I improve with something. You ?\endgraf
        \myhangindent
{\bf A:} I am retired now.   Are you familiar with the work of Mehr and Meyer, well known psychologists ?\endgraf
        \myhangindent
{\bf B:} They said that " Motivation is a word that is part of the popular culture as few other psychological concepts are . "   Do you agree with what they said ?
}
      & {\bf DEMI}: Yes, definitely! You should check out Mark Twain's famous motivational book ! There are theories that point out how important it is to meditate. \\
      & {\bf InfoNCE}: They said that in psychology research, theories and evidence, participants who enjoyed continuous experience could be more likely to be successful. \\
      & {\bf TransferTransfo}: Absolutely, I disagree. Are you an author or a commentator? \\
      &{\bf GPT2}: That's awesome. So , what types of differences exist between ``good " motivation and bad? \\
\midrule
\end{tabular}
}
\label{tab:SampleResponses}
\end{table*}

\begin{table}[h]
\small
\begin{minipage}{\linewidth}
\centering
\begin{tabular}{@{}lrlrlll@{}}
\toprule
{} &  DEMI\_wins &       DEMI\_CI &  baseline\_wins &   baseline\_CI &     pairwise\_CI &        $p$ \\
Baseline          &            &               &                &               &                              &         \\
\midrule
GPT2          &    0.48726 &  (0.44, 0.53] &        0.28662 &  (0.25, 0.32] &    (0.13, 0.27]            * &  <0.001 \\
GPT2-MMI       &    0.65833 &   (0.6, 0.71] &        0.16250 &  (0.12, 0.21] &     (0.4, 0.58]            * &  <0.001 \\
TransferTransfo      &    0.46888 &  (0.43, 0.51] &        0.30043 &  (0.26, 0.34] &    (0.09, 0.24]            * &  <0.001 \\
InfoNCE       &    0.41711 &  (0.38, 0.46] &        0.36748 &  (0.33, 0.41] &   (-0.03, 0.13]              &  0.0905 \\
gold\_response &    0.22679 &  (0.19, 0.26] &        0.54325 &   (0.5, 0.59] &  (-0.39, -0.25]            * &  <0.001 \\
\bottomrule
\end{tabular}
\caption{Which response is more \textit{relevant}?}
\end{minipage}
\begin{minipage}{\linewidth}
\centering
\begin{tabular}{@{}lrlrllll@{}}
\toprule
{} &  DEMI\_wins &       DEMI\_CI &  baseline\_wins &   baseline\_CI &     pairwise\_CI &        $p$ \\
Baseline          &            &               &                &               &                              &         \\
\midrule
GPT2          &    0.45084 &  (0.41, 0.49] &        0.32636 &  (0.29, 0.37] &     (0.05, 0.2]            * &  <0.001 \\
GPT2-MMI       &    0.61734 &  (0.56, 0.67] &        0.18393 &  (0.14, 0.23] &    (0.34, 0.53]            * &  <0.001 \\
TransferTransfo      &    0.43617 &   (0.4, 0.48] &        0.35000 &  (0.31, 0.39] &    (0.01, 0.16]            * &  0.0028 \\
InfoNCE       &    0.44630 &  (0.41, 0.49] &        0.34515 &  (0.31, 0.38] &    (0.03, 0.17]            * &  <0.001 \\
gold\_response &    0.22164 &  (0.19, 0.26] &        0.56608 &  (0.52, 0.61] &  (-0.41, -0.28]           * &  <0.001 \\
\bottomrule
\end{tabular}
\caption{Which response is more \textit{humanlike}?}
\end{minipage}
\begin{minipage}{\linewidth}
\small
\centering
\begin{tabular}{@{}lrlrllc@{}}
\toprule
{} &  DEMI\_wins &       DEMI\_CI &  baseline\_wins &   baseline\_CI &     pairwise\_CI &        $p$ \\
Baseline          &            &               &                &               &                              &         \\
\midrule
GPT2          &    0.56157 &   (0.52, 0.6] &        0.21444 &  (0.18, 0.25] &    (0.28, 0.42]            * &  <0.001 \\
GPT2-MMI       &    0.68750 &  (0.63, 0.74] &        0.12292 &  (0.09, 0.16] &    (0.48, 0.65]            * &  <0.001 \\
TransferTransfo      &    0.51931 &  (0.48, 0.56] &        0.24571 &  (0.21, 0.28] &    (0.21, 0.34]      * &  <0.001 \\
InfoNCE       &    0.41288 &  (0.37, 0.45] &        0.33580 &   (0.3, 0.38] &     (0.0, 0.15]            * &  0.0059 \\
gold\_response &    0.32384 &  (0.28, 0.36] &        0.46624 &  (0.43, 0.51] &  (-0.22, -0.07]            * &  <0.001 \\
\bottomrule
\end{tabular}
\caption{Which response is more \textit{interesting}?}
\end{minipage}
\end{table}

\subsection{Human Evaluation}

We closely follow the protocol used in \citet{zhang2019dialogpt}. Systems were paired and each response pair was presented to 3 judges in random order. Judges expressed their preference on a 3 point Likert scale. We use a majority vote for each response pair to decide whether a specific baseline, the pivot (DEMI), or neither, performed better. We then bootstrap the set of majority votes to obtain a 99\% confidence interval (CI) on the expected difference between the baseline and DEMI. If this confidence interval contains 0, the difference is deemed insignificant. We also compute p-values from the confidence intervals\footnote{\url{https://www.bmj.com/content/343/bmj.d2304}}.

In the following tables, the ``pivot'' is always the system given by DEMI.
Pairings where the pairwise confidence interval is marked with ``*'' have a significant difference.

\end{document}